\newtheorem{thm}{Theorem}
\newtheorem{cor}[thm]{Corollary}
\def \R {\mathbb{R}}
\def \y {\mathbf{y}}
\def \E {\mathrm{E}}
\def \x {\mathbf{x}}
\def \a {\mathbf{a}}
\def \L {\mathcal{L}}
\def \v {\mathbf{v}}
\def \z {\mathbf{z}}
\def \xh {\widehat{\x}}
\def \u {\mathbf{u}}
\def \v {\mathbf{v}}
\def \w {\mathbf{w}}
\def \R {\mathbb{R}}
\def \A {\mathcal{A}}
\def \Hh {\widehat{H}}
\def \tr {\mbox{tr}}
\def \N {\mathcal{N}}
\def \wh {\widehat{\w}}
\def \abh {\widehat{\bm \alpha}}
\def \ah {\widehat{\a}}
\def \ab {\bm \alpha}
\def \gb {\bm \gamma}
\def \gbt {\widetilde{\gb}}
\def \eb {\bm \eta}
\def \wt {\widetilde{\w}}
\def \Gh {\widehat{G}}
\title[Dual Random Projection]{Recovering the Optimal Solution by \\ Dual Random Projection}
\begin{document}
\maketitle
\begin{abstract}
Random projection has been widely used in data classification. It maps high-dimensional data into a low-dimensional subspace in order to reduce the computational cost in solving the related optimization problem. While previous studies are focused on analyzing the classification performance of using random projection, in this work, we consider the {\it recovery problem}, i.e., how to accurately recover the optimal solution to the original optimization problem in the high-dimensional space based on the solution learned from the subspace spanned by random projections. We present a simple algorithm, termed {\it Dual Random Projection}, that uses the dual solution of the low-dimensional optimization problem to recover the optimal solution to the original problem. Our theoretical analysis shows that with a high probability, the proposed algorithm is able to accurately recover the optimal solution to the original problem, provided that the data matrix is of low rank or can be well approximated by a low rank matrix.
\end{abstract}

\begin{keywords}
Random projection, Primal solution, Dual solution, Low rank
\end{keywords}
\section{Introduction}
Random projection is a simple yet powerful dimensionality reduction technique that projects the original high-dimensional
data onto a low-dimensional subspace using a random matrix~\citep{kaski-1998-dimensionality,bingham-2001-random}. It has been successfully applied to many machine learning tasks, including classification~\citep{Fradkin-2003-experiment,Vempala-2004-random,rahimi-2008-random}, regression~\citep{maillard-2012-linear}, clustering~\citep{fern-2003-random,boutsidis-2010-random},  manifold learning~\citep{dasgupta:2008:tree,Freund-2008-Learning}, and information retrieval~\citep{Goal-2005-face}.

In this work, we focus on random projection for classification. While previous studies were devoted to analyzing the classification performance using random projection~\citep{arriage-1999-algorithmic,ML06:Balcan,RP:SVM,Margin_RP}, we examine the effect of random projection from a very different aspect. In particular, we are interested in accurately recovering the optimal solution to the original high-dimensional optimization problem using random projection. This is particularly useful for feature selection~\citep{guyon-2003-introduction}, where important features are often selected based on their weights in the linear prediction model learned from the training data.
In order to ensure that similar features are selected, the prediction model based on random projection needs to be close to the model obtained by solving the original optimization problem directly.

The proposed algorithm for recovering the optimal solution consists of two simple steps. In the first step, similar to previous studies, we apply random projection to reducing the dimensionality of the data, and then solve a low-dimensional optimization problem. In the second step, we construct the dual solution of the low-dimensional problem from its primal solution, and then use it to recover the optimal solution to the original high-dimensional problem. Our analysis reveals that with a high probability, we are able to recover the optimal solution with a small error by using $\Omega(r\log r)$ projections, where $r$ is the rank of the data matrix. A similar result also holds when the data matrix can be well approximated by a low rank matrix. We further show that the proposed algorithm can be applied iteratively to recovering the optimal solution with a relative error $\epsilon$ by using $O(\log 1/\epsilon)$ iterations.

The rest of the paper is arranged as follows. Section~\ref{sec:problem} describes the problem of recovering optimal solution by random projection, the theme of this work. Section~\ref{sec:algorithm} describes the dual random projection approach for recovering the optimal solution. Section~\ref{sec:results} presents the main theoretical results for the proposed algorithm. Section~\ref{sec:analysis} presents the proof for the theorems stated in Section~\ref{sec:results}. Section~\ref{sec:conclusion} concludes this work.
\section{The Problem} \label{sec:problem}
Let $(\x_i, y_i), i=1, \ldots, n$ be a set of training examples, where $\x_i \in \R^d$ is a vector of $d$ dimension and $y_i \in \{-1, +1\}$ is the binary class assignment for $\x_i$. Let $X = (\x_1, \ldots, \x_n)$ and $\y = (y_1, \ldots, y_n)^{\top}$ include input patterns and the class assignments of all training examples. A classifier $\w \in \R^d$ is learned by solving the following optimization problem:
\begin{eqnarray} \label{eqn:primal}
    \min\limits_{\w \in \R^d} \frac{\lambda}{2}\|\w\|^2 + \sum_{i=1}^n \ell(y_i\x_i^{\top}\w),
\end{eqnarray}
where $\ell(z)$ is a convex loss function that is differentiable\footnote{For non-differentiable loss functions such as hinge loss, we could apply the smoothing technique described in~\citep{Nesterov:2005:SMN} to make it differentiable.}. By writing $\ell(z)$ in its convex conjugate form, i.e.,
\[
    \ell(z) = \max\limits_{\alpha \in \Omega} \alpha z - \ell_*(\alpha),
\]
where $\ell_*(\alpha)$ is the convex conjugate of $\ell(z)$ and $\Omega$ is the domain of the dual variable, we get the dual optimization problem:
\begin{eqnarray}
    \max\limits_{\ab \in \Omega^n} -\sum_{i=1}^n \ell_*(\alpha_i) - \frac{1}{2\lambda} \ab^{\top} G \ab, \label{eqn:dual}
\end{eqnarray}
where $\ab=(\alpha_1,\cdots, \alpha_n)^{\top}$ and $D(\y) = \mbox{diag}(\y)$ and $G$ is the Gram matrix given by
\begin{eqnarray}\label{eqn:G}
    G = D(\y)X^{\top}XD(\y).
\end{eqnarray}
In the following, we denote by $\w_* \in \R^d$ the optimal primal solution to~(\ref{eqn:primal}), and by $\ab_* \in \R^n$  the optimal dual solution to~(\ref{eqn:dual}).  The following proposition connects $\w_*$ and $\ab_*$.
\begin{proposition}\label{prop:1} Let $\w_* \in \R^d$  be the optimal primal solution to~(\ref{eqn:primal}), and $\ab_* \in \R^n$  be the optimal dual solution to~(\ref{eqn:dual}). We have
\begin{eqnarray}
\w_*=  -\frac{1}{\lambda}XD(\y)\ab_*,\textrm{ and }[\ab_*]_i= \nabla\ell\left(y_i\x_i^{\top}\w_*\right), \ i=1,\ldots, n.
\end{eqnarray}
\end{proposition}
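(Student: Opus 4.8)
The plan is to derive both identities from the first-order optimality conditions, using the fact that $\ell$ is differentiable and convex so that the conjugate pairing is tight at the optimum.

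First I would write the primal objective as $F(\w) = \frac{\lambda}{2}\|\w\|^2 + \sum_{i=1}^n \ell(y_i\x_i^\top\w)$ and set its gradient to zero. Since $\ell$ is differentiable, $\nabla F(\w_*) = \lambda \w_* + \sum_{i=1}^n \nabla\ell(y_i\x_i^\top\w_*)\, y_i \x_i = 0$, which immediately gives $\w_* = -\frac{1}{\lambda}\sum_{i=1}^n \nabla\ell(y_i\x_i^\top\w_*)\, y_i\x_i = -\frac{1}{\lambda} X D(\y)\, \g$, where $\g$ is the vector with entries $g_i = \nabla\ell(y_i\x_i^\top\w_*)$. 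So if I can show $\ab_* = \g$, i.e. $[\ab_*]_i = \nabla\ell(y_i\x_i^\top\w_*)$, the first identity follows as well.

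To identify $\ab_*$, I would use the standard fact about convex conjugates of differentiable functions: for each $z$, the maximizer in $\ell(z) = \max_{\alpha\in\Omega} \alpha z - \ell_*(\alpha)$ is attained at $\alpha = \nabla\ell(z)$. Equivalently, I can invoke strong duality between \eqref{eqn:primal} and \eqref{eqn:dual} (which holds here since the primal is strongly convex and the loss is a closed convex function, so Slater-type conditions are trivially met), giving a primal-dual optimal pair related by the KKT/conjugacy conditions. Concretely, at the saddle point the inner maximization over each $\alpha_i$ in $\ell(y_i\x_i^\top\w_*) = \max_{\alpha_i} \alpha_i y_i\x_i^\top\w_* - \ell_*(\alpha_i)$ forces $[\ab_*]_i = \nabla\ell(y_i\x_i^\top\w_*)$; plugging this into $\w_* = -\frac{1}{\lambda}XD(\y)\ab_*$ recovers the formula obtained above from stationarity, confirming consistency. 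Alternatively, one can differentiate the dual objective directly: $\nabla_{\alpha_i}\big[-\sum_j\ell_*(\alpha_j) - \frac{1}{2\lambda}\ab^\top G\ab\big] = -\nabla\ell_*(\alpha_i) - \frac{1}{\lambda}[G\ab]_i = 0$, and using $G\ab = D(\y)X^\top X D(\y)\ab = -\lambda D(\y) X^\top \w_*$ together with $\nabla\ell_*(\alpha_i) = y_i\x_i^\top\w_*$ (the inverse relation $\nabla\ell_* = (\nabla\ell)^{-1}$), one reads off $[\ab_*]_i = \nabla\ell(y_i\x_i^\top\w_*)$.

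The only genuinely delicate point is justifying strong duality and the tightness of the conjugate pairing at the optimum — i.e. that no duality gap arises and that the envelope theorem / conjugate-subgradient correspondence applies with equality. Given that the primal is $\lambda$-strongly convex (hence has a unique minimizer $\w_*$) and $\ell$ is convex and differentiable, this is routine: the conjugate-subgradient inequality $\alpha z \le \ell(z) + \ell_*(\alpha)$ holds with equality exactly when $\alpha \in \partial\ell(z) = \{\nabla\ell(z)\}$, and strong duality for regularized empirical risk of this form is standard. I would state this as a one-line invocation rather than reprove it. Everything else is the short computation above.
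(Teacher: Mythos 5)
Your proof is correct and follows essentially the same route as the paper's: both identities come from first-order optimality combined with the Fenchel conjugate correspondence $[\ab_*]_i=\nabla\ell\left(y_i\x_i^{\top}\w_*\right)$ at the optimum. The only cosmetic difference is the ordering: the paper first minimizes the Lagrangian with $\ab_*$ plugged in to get $\w_*=-\frac{1}{\lambda}XD(\y)\ab_*$ and then invokes conjugacy, whereas you obtain $\w_*$ from primal stationarity and then identify $\ab_*$ through the same conjugacy/saddle-point fact (your explicit attention to strong duality is, if anything, more careful than the paper's).
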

The proof of Proposition~\ref{prop:1} and other omitted proofs are deferred to the Appendix.  When the dimensionality $d$ is high and the number of training examples $n$ is large, solving either the primal problem in (\ref{eqn:primal}) or the dual problem in (\ref{eqn:dual}) can be computationally expensive. To reduce the computational cost, one common approach is to significantly reduce the dimensionality by random projection. Let $R \in \R^{d\times m}$ be a Gaussian random matrix, where each entry $R_{i,j}$ is independently drawn from a Gaussian distribution $\N(0, 1)$ and $m$ is significantly smaller than $d$. Using the random matrix $R$, we generate a low-dimensional representation for each input example by
\begin{eqnarray}
    \xh_i = \frac{1}{\sqrt{m}} R^{\top} \x_i, \label{eqn:proj-data}
\end{eqnarray}
and solve the following low-dimensional optimization problem:
\begin{eqnarray}
    \min\limits_{\z \in \R^m} \frac{\lambda}{2}\|\z\|^2 + \sum_{i=1}^n \ell(y_i\z^{\top}\xh_i). \label{eqn:primal-1}
\end{eqnarray}
The corresponding dual problem is written as
\begin{eqnarray}
    \max\limits_{\ab \in \Omega^n} -\sum_{i=1}^n \ell_*(\alpha_i) - \frac{1}{2\lambda} \ab^{\top} \Gh \ab, \label{eqn:dual-1}
\end{eqnarray}
where
\begin{eqnarray}\label{eqn:Gh}
    \Gh = D(\y)X^{\top} \frac{RR^{\top}}{m} X D(\y).
\end{eqnarray}
Intuitively, the choice of Gaussian random matrix $R$ is justified by the expectation of the dot-product between any two examples in the projected space is equal to the dot-product in the original space, i.e.,
\[
\E[\xh_i^{\top}\xh_j]= \x_i^{\top}\E\left[\frac{1}{m}RR^{\top}\right]\x_j = \x_i^{\top}\x_j,
\]
where the last equality follows from $\E[RR^{\top}/m]=I$. Thus, $G=\Gh$ holds in expectation.

Let $\z_* \in \R^m$ denote the optimal primal solution to the low-dimensional problem~(\ref{eqn:primal-1}), and $\abh_* \in \R^n$ denote the optimal dual solution to~(\ref{eqn:dual-1}).  Similar to Proposition~\ref{prop:1}, the following proposition connects $\z_*$ and $\abh_*$.
\begin{proposition}\label{prop:2} We have
\begin{eqnarray} \label{eqn:relation:2}
\z_*=  -\frac{1}{\lambda}\frac{1}{\sqrt{m}}R^{\top}XD(\y)\abh_*,\textrm{ and } [\abh_*]_i = \nabla\ell
\left(\frac{y_i}{\sqrt{m}}\x_i^{\top}R\z_*\right), \ i=1,\ldots, n.
\end{eqnarray}
\end{proposition}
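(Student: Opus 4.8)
The plan is to recognize that the low-dimensional primal problem~(\ref{eqn:primal-1}) is itself an instance of the generic regularized problem~(\ref{eqn:primal}), obtained by replacing the input patterns $\x_i\in\R^d$ with the projected patterns $\xh_i\in\R^m$ and, equivalently, the data matrix $X$ with $\widehat{X}=\frac{1}{\sqrt{m}}R^{\top}X\in\R^{m\times n}$, whose $i$-th column is exactly $\xh_i$. The regularizer $\frac{\lambda}{2}\|\cdot\|^2$ and the loss $\ell$ are unchanged, so none of the steps that led from~(\ref{eqn:primal}) to~(\ref{eqn:dual}) and to Proposition~\ref{prop:1} depends on the ambient dimension.

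First I would verify that the dual of~(\ref{eqn:primal-1}) obtained by this substitution is precisely~(\ref{eqn:dual-1}): carrying out the same convex-conjugate calculation with $X$ replaced by $\widehat{X}$ turns the Gram matrix $G=D(\y)X^{\top}XD(\y)$ into $D(\y)\widehat{X}^{\top}\widehat{X}D(\y)=D(\y)X^{\top}\frac{RR^{\top}}{m}XD(\y)=\Gh$, which matches~(\ref{eqn:Gh}). Hence $\z_*$ is the optimal primal solution and $\abh_*$ the optimal dual solution of the primal--dual pair~(\ref{eqn:primal})--(\ref{eqn:dual}) associated with the data matrix $\widehat{X}$. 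Then I would simply invoke Proposition~\ref{prop:1} for this instance, which gives $\z_*=-\frac{1}{\lambda}\widehat{X}D(\y)\abh_*$ and $[\abh_*]_i=\nabla\ell\bigl(y_i\,\xh_i^{\top}\z_*\bigr)$. Substituting back $\widehat{X}=\frac{1}{\sqrt{m}}R^{\top}X$ and $\xh_i^{\top}\z_*=\frac{1}{\sqrt{m}}\x_i^{\top}R\z_*$ yields exactly~(\ref{eqn:relation:2}).

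There is no real obstacle here; the only thing to watch is the bookkeeping of the $1/\sqrt{m}$ factor, which must be attached consistently to $R^{\top}$ so that $\widehat{X}^{\top}\widehat{X}$ produces $RR^{\top}/m$ rather than $RR^{\top}$, and the observation that strong duality and the first-order conditions underlying Proposition~\ref{prop:1} still apply, which they do since $\widehat{X}$ is a fixed (data-dependent) matrix and $\ell,\ell_*$ are unchanged. If a self-contained argument is preferred, the same conclusion follows directly by writing~(\ref{eqn:primal-1}) as a min--max through the conjugate form of $\ell$, exchanging the min and the max, and reading off the stationarity condition $\lambda\z+\sum_{i=1}^n\alpha_i y_i\xh_i=0$ together with $\alpha_i=\nabla\ell(y_i\xh_i^{\top}\z_*)$.
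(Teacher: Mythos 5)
Your proposal is correct and matches the paper's own treatment: the paper proves Proposition~\ref{prop:1} via the conjugate form and first-order conditions and then notes Proposition~\ref{prop:2} follows by the identical argument with the data matrix replaced by $\widehat{X}=R^{\top}X/\sqrt{m}$, which is exactly your reduction. Your bookkeeping of the $1/\sqrt{m}$ factor and the check that the induced Gram matrix is $\Gh$ are consistent with~(\ref{eqn:Gh}), so nothing further is needed.
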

Given the optimal solution $\z_*\in\R^m$, the data point $\x\in\mathbb R^d$ is classified by $\x^{\top}R\z_*/\sqrt{m}$, which is equivalent to defining a new solution $\wh\in\R^d$ given below, which we refer to as the naive solution
\begin{eqnarray}
    \wh = \frac{1}{\sqrt{m}} R\z_*. \label{eqn:wh}
\end{eqnarray}
The classification performance of $\wh$ has been examined by many studies~\citep{arriage-1999-algorithmic,ML06:Balcan,RP:SVM,Margin_RP}. The general conclusion is that when the original data is linearly separable with a large margin, the classification error for $\wh$ is usually small.

Although these studies show that $\wh$ can achieve a small classification error under appropriate assumptions, it is unclear whether $\wh$ is a good approximation of the optimal solution $\w_*$. In fact, as we will see in Section~\ref{sec:results}, the naive solution is almost guaranteed to be a BAD approximation of the optimal solution, that is, $\|\wh - \w_*\|_2 = \Omega(\sqrt{d/m}\|\w_*\|_2)$. This observation leads to an interesting question: \textit{is it possible to accurately recover the optimal solution $\w_*$ based on $\z_*$, the optimal solution to the low-dimensional optimization problem}?


\paragraph{Relationship to Compressive Sensing} The proposed problem is closely related to compressive sensing~\citep{donoho-2006-cs,candes-2008-introduction} where the goal is to recover a high-dimensional but sparse vector using a small number of random measurements. The key difference between our work and compressive sensing is that we do not have the direct access to the random measurement of the target vector (which in our case is $\w_* \in \R^d$). Instead, $\z_* \in \R^m$ is the optimal solution to (\ref{eqn:primal-1}), the primal problem using random projection. However, the following theorem shows that $\z_*$ is a good approximation of $R^{\top}\w_*/\sqrt{m}$, which includes $m$ random measurements of $\w_*$, if the data matrix $X$ is of low rank and the number of random measurements $m$ is sufficiently large.
\begin{thm} \label{thm:rand-measure}
For any $0 < \varepsilon \leq 1/2$, with a probability at least $1 - \delta - \exp(-m/32)$, we have
\[
    \|\sqrt{m}\z_* - R^{\top}\w_*\|_2 \leq \frac{\sqrt{2}\varepsilon}{\sqrt{1 - \varepsilon}}\|R^{\top}\w_*\|_2,
\]
provided
\[
m \geq \frac{(r+1) \log(2 r/\delta)}{c \varepsilon ^2},
\]
where constant $c$ is at least $1/4$, and $r$ is the rank of $X$.
\end{thm}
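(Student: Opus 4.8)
The plan is to compare the two primal objectives at their respective optima. Write $P(\w) = \frac{\lambda}{2}\|\w\|^2 + \sum_i \ell(y_i\x_i^\top\w)$ for the original problem and $\widehat{P}(\z) = \frac{\lambda}{2}\|\z\|^2 + \sum_i \ell(y_i \z^\top\xh_i)$ for the projected one. The key observation is that for the particular vector $\z_0 := R^\top\w_*/\sqrt{m}$ we have $\z_0^\top\xh_i = \frac{1}{m}\w_*^\top R R^\top\x_i$, which approximates $\w_*^\top\x_i$ when $RR^\top/m \approx I$ on the relevant subspace. The first step is therefore to establish, using the Johnson--Lindenstrauss / subspace-embedding property of a Gaussian $R$ restricted to the at-most-$(r{+}1)$-dimensional space spanned by the columns of $X$ together with $\w_*$, that with the stated probability $\|R^\top\u\|_2^2 \in [(1-\varepsilon)\|\u\|^2,(1+\varepsilon)\|\u\|^2]\cdot m$ for all $\u$ in that subspace; this is where the sample-complexity bound $m \gtrsim r\log(r/\delta)/\varepsilon^2$ and the failure probability $\delta$ come from (a net argument on the low-dimensional sphere, hence the $\log(2r/\delta)$ factor). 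The extra $\exp(-m/32)$ term will come from controlling $\|R^\top\w_*\|_2$ or the operator norm of $RR^\top/m$ separately via a standard Gaussian concentration bound.

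The second step is to turn this geometric embedding fact into a bound on $\z_*$. Since $\z_*$ minimizes $\widehat{P}$, we have $\widehat{P}(\z_*) \le \widehat{P}(\z_0)$, and each loss term $\ell(y_i\z_0^\top\xh_i)$ is close to $\ell(y_i\x_i^\top\w_*)$ up to the embedding error, so $\widehat{P}(\z_0) \le P(\w_*) + (\text{small error})$. On the other side, I would use strong convexity of $\widehat{P}$ (which has modulus $\lambda$ from the regularizer) around its minimizer $\z_*$: $\widehat{P}(\z_0) - \widehat{P}(\z_*) \ge \frac{\lambda}{2}\|\z_0 - \z_*\|^2$. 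Combining the two inequalities yields $\frac{\lambda}{2}\|\z_0-\z_*\|^2 \le \widehat{P}(\z_0) - \widehat{P}(\z_*) \le \widehat{P}(\z_0) - P(\w_*) + (P(\w_*) - \widehat{P}(\z_*))$, and I need to also bound $P(\w_*) - \widehat{P}(\z_*)$ from above — for this I would go the other direction, comparing $P$ at a candidate built from $\z_*$ (namely $R\z_*/\sqrt{m}$, or better a rescaled version lying in the column space of $X$) against $P(\w_*)$, again paying only the embedding error. Chaining these gives $\|\z_0 - \z_*\|^2 = \|\z_* - R^\top\w_*/\sqrt{m}\|^2 \lesssim \frac{\varepsilon}{\lambda}(\text{something}) $, and the right-hand side should be reorganized to produce $\|R^\top\w_*\|_2^2$, using $\lambda\|\w_*\|^2 \le 2P(\w_*)$-type crude bounds and the embedding to pass between $\|\w_*\|$ and $\|R^\top\w_*\|/\sqrt{m}$.

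The main obstacle I anticipate is the bookkeeping in step two: getting the error to come out cleanly as a multiple of $\|R^\top\w_*\|_2$ rather than $\|\w_*\|_2$ or $\sqrt{P(\w_*)/\lambda}$, and in particular producing the precise constant $\sqrt{2}\varepsilon/\sqrt{1-\varepsilon}$. This will require being careful about which quantity is used as the "comparison point" in each of the two objective comparisons — one wants both comparison points to live in the span of the columns of $X$ (so the Gram-matrix structure $\Gh = D(\y)X^\top(RR^\top/m)XD(\y)$ interacts correctly with the embedding), and one wants the cross terms from expanding $\|\z_* - \z_0\|^2$ to be absorbed rather than dropped. A secondary, more routine obstacle is handling a general differentiable convex loss $\ell$ without Lipschitz or smoothness assumptions: here I expect the argument actually never differentiates $\ell$ and instead only uses convexity plus the fact that the two arguments $y_i\z_0^\top\xh_i$ and $y_i\x_i^\top\w_*$ are close, so that $\ell$ at one is controlled by $\ell$ at the other via convexity and a subgradient at $\w_*$ — but one must check this does not secretly reintroduce a dependence on $\|\nabla\ell\|$, i.e. on $\|\ab_*\|$, which by Proposition~\ref{prop:1} is tied to $\|XD(\y)\ab_*\| = \lambda\|\w_*\|$ and hence is in fact controllable. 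If that last point forces an appeal to smoothness of $\ell$ after all, the theorem statement's footnote about smoothing non-differentiable losses suggests that is acceptable.
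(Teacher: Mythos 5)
Your route is genuinely different from the paper's, and as sketched it cannot reach the stated bound. The paper proves Theorem~\ref{thm:rand-measure} through the dual: it reuses the variational inequality (\ref{eqn:9}) from the proof of Theorem~\ref{thm:recovery}, observes the identities $\sqrt{m}\,\z_*=-A^{\top}\gbt/\lambda$ and $R^{\top}\w_*=-A^{\top}\gb_*/\lambda$ (so $\frac{\lambda^2}{m}\|\sqrt{m}\z_*-R^{\top}\w_*\|_2^2$ is exactly the right-hand side of (\ref{eqn:9})), applies Corollary~\ref{cor:gaussian} to get $\frac{1}{m}\|\sqrt{m}\z_*-R^{\top}\w_*\|_2^2\le\varepsilon\|\w_*\|_2\|\wt-\w_*\|_2$, invokes Theorem~\ref{thm:recovery} to bound $\|\wt-\w_*\|_2\le\frac{\varepsilon}{1-\varepsilon}\|\w_*\|_2$, and finally uses Theorem~\ref{thm:jl} on the fixed vector $\w_*$ (this is the $\exp(-m/32)$ term, which you identified correctly) to replace $\|\w_*\|_2$ by $\|R^{\top}\w_*\|_2/\sqrt{m}$. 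The crucial structural point is that in the dual inequality the ``error'' term $(\abh_*-\ab_*)^{\top}(G-\Gh)\ab_*$ is itself proportional to $\|\gbt-\gb_*\|_2$, which is what yields a recovery bound linear in $\varepsilon$ and hence, after combining with (\ref{eqn:9}), an $\varepsilon^2$ bound on the squared low-dimensional error. Your primal chaining does not have this property: in $\frac{\lambda}{2}\|\z_0-\z_*\|_2^2\le[\widehat{P}(\z_0)-P(\w_*)]+[P(\w_*)-\widehat{P}(\z_*)]$ each bracket is of size $\varepsilon$ times fixed norms (regularizer and loss fluctuations of order $\varepsilon$, not proportional to $\|\z_0-\z_*\|_2$), so even if every bookkeeping step succeeded you would only conclude $\|\sqrt{m}\z_*-R^{\top}\w_*\|_2=O(\sqrt{\varepsilon})\|R^{\top}\w_*\|_2$, strictly weaker than the claimed $\frac{\sqrt{2}\varepsilon}{\sqrt{1-\varepsilon}}\|R^{\top}\w_*\|_2$. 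Repairing this inside the primal would require a first-order argument such as $\lambda\|\z_0-\z_*\|_2\le\|\nabla\widehat{P}(\z_0)\|_2$, and bounding $\nabla\widehat{P}(\z_0)$ needs smoothness of $\ell$ and brings in factors of the top singular value, assumptions and losses the theorem does not incur.

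Two of your intermediate steps also fail as stated. First, to bound $P(\w_*)-\widehat{P}(\z_*)$ you propose evaluating $P$ at $R\z_*/\sqrt{m}$; but $\|R\z_*/\sqrt{m}\|_2^2\approx(d/m)\|\z_*\|_2^2$ --- this norm blow-up is precisely the content of Theorem~\ref{thm:bad-recovery} --- so $P(R\z_*/\sqrt{m})$ sits far above $\widehat{P}(\z_*)$ and the comparison is vacuous. Projecting the candidate onto $\mbox{span}(X)$ fixes the loss terms and lets Corollary~\ref{cor:gaussian} control the norm, but the resulting error is $\varepsilon\lambda\|\z_*\|_2^2$ and, through your proposed ``crude bound,'' multiples of $P(\w_*)$, which contains the total loss and is not $O(\lambda\|\w_*\|_2^2)$; the inequality $\lambda\|\w_*\|_2^2\le 2P(\w_*)$ points in the wrong direction for this purpose. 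Second, upper-bounding $\sum_i\ell(y_i\z_0^{\top}\xh_i)-\sum_i\ell(y_i\x_i^{\top}\w_*)$ by convexity requires gradients at the perturbed arguments $y_i\w_*^{\top}RR^{\top}\x_i/m$; the gradients at $\w_*$, i.e.\ $\ab_*$, only give the opposite-direction inequality, and the aggregate cancellation $XD(\y)\ab_*=-\lambda\w_*$ that you hope will control the sum is available exactly for $\ab_*$ and $\abh_*$ --- which is the structure the paper's dual argument uses directly and your primal route cannot access without additional smoothness assumptions.
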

Given the approximation bound in Theorem~\ref{thm:rand-measure}, it is appealing to reconstruct $\w_*$ using the compressive sensing algorithm provided that $\w_*$ is sparse to certain bases. We note that the low rank assumption for data matrix $X$ implies that $\w_*$ is sparse with respect to the singular vectors of $X$. However, since $\z_*$ only provides an approximation to the random measurements of $\w_*$, running the compressive sensing algorithm will not be able to perfectly recover $\w_*$ from $\z_*$. In Section~\ref{sec:algorithm}, we present an algorithm, that recovers $\w_*$ with a small error, provided that the data matrix $X$ is of low rank. Compared to the compressive sensing algorithm, the main advantage of the proposed algorithm is its computational simplicity; it neither computes the singular vectors of $X$ nor solves an optimization problem that minimizes the $\ell_1$ norm.
\begin{algorithm}[t]
\caption{A Dual Random Projection Approach for Recovering Optimal Solution}
\begin{algorithmic}[1]
\STATE {\bf Input:} input patterns $X \in \R^{d\times n}$, binary class assignment $\y \in \{-1, +1\}^n$, and sample size $m$
\STATE Sample a Gaussian random matrix $R \in \R^{d\times m}$ and compute $\widehat{X}=[\xh_1, \ldots, \xh_n] = R^{\top}X/\sqrt{m}$
\STATE Obtain the primal solution $\z_*  \in \R^m$ by solving the optimization problem in~(\ref{eqn:primal-1})
\STATE Construct the dual solution $\abh_* \in \R^n$ by Proposition~\ref{prop:2}, i.e.,
\[
 [\abh_*]_i = \nabla\ell
\left(\frac{y_i}{\sqrt{m}}\x_i^{\top}R\z_*\right), \ i=1,\ldots, n
\]
\STATE Compute $\wt \in \R^d$ according to~(\ref{eqn:wt}), i.e., $\wt = - XD(\y)\abh_*/\lambda$
\STATE {\bf Output:} the recovered solution $\wt$
\end{algorithmic}
\label{alg:1}
\end{algorithm}

\section{Algorithm} \label{sec:algorithm}
To motivate our algorithm, let us revisit the optimal primal solution $\w_*$ to~(\ref{eqn:primal}), which is given in Proposition~\ref{prop:1}, i.e.,
\begin{equation}\label{eqn:pw}
\w_*=  -\frac{1}{\lambda}XD(\y)\ab_*,
\end{equation}
where $\ab_*$ is the optimal solution to the dual problem~(\ref{eqn:dual}).  Given the projected data $\xh=R^{\top}\x/\sqrt{m}$, we have reached an approximate dual problem in~(\ref{eqn:dual-1}). Comparing it with the dual problem in~(\ref{eqn:dual}), the only difference is that the Gram matrix $G = D(\y)X^{\top}XD(\y)$ in~(\ref{eqn:dual}) is replaced with $\Gh = D(\y)X^{\top} RR^{\top} X D(\y)/m$ in~(\ref{eqn:dual-1}). Recall that $\E[RR^{\top}/m]=I$. Thus, when the number of random projections $m$ is sufficiently large, $\Gh$ will be close to the $G$ and we would also expect $\abh_*$ to be close to $\ab_*$. As a result, we can use $\abh_*$ to approximate $\ab_*$ in~(\ref{eqn:pw}), which yields a recovered prediction model given below:
\begin{eqnarray}
    \wt = -  \frac{1}{\lambda}XD(\y)\abh_* =  -\sum_{i=1}^n \frac{1}{\lambda}y_i [\abh_*]_i \x_i. \label{eqn:wt}
\end{eqnarray}

Note that the key difference between the recovered solution $\wt$ and the naive solution $\wh$ is that $\wh$ is computed by mapping the optimal primal solution $\z_*$ in the projected space back to the original space via the random matrix $R$, while $\wt$ is computed directly in the original space using the approximate dual solution $\abh_*$. Therefore, the naive solution  $\wh$  lies in the subspace spanned by the column vectors in the random matrix $R$ (denoted by $\A_R$), while the recovered solution $\wt$ lies in the subspace that also contains the optimal solution $\w_*$, i,e., the subspace spanned by columns of  $X$ (denoted by $\A$). The mismatch between spaces $\A_R$ and $\A$ leads to the large approximation error for $\wh$.

According to Proposition~\ref{prop:2}, we can construct the dual solution $\abh_*$ from the primal solution $\z_*$. Thus, we do not need to solve the dual problem in~(\ref{eqn:dual-1}) to obtain $\abh_*$. Instead, we solve the low-dimensional optimization problem in (\ref{eqn:primal-1}) to get $\z_*$ and construct $\abh_*$ from it. Algorithm~\ref{alg:1} shows the details of the proposed method. We note that although dual variables have been widely used in the analysis of convex optimization~\citep{boyd-2004-convex,hazan-2011-beating} and online learning~\citep{shai-2006-online}, to the best of our knowledge, this is the first time that dual variables are used in conjunction with random projection for recovering the optimal solution.

\begin{algorithm}[t]
\caption{An Iterative Dual Random Projection Approach for Recovering Optimal Solution}
\begin{algorithmic}[1]
\STATE {\bf Input:} input patterns $X \in \R^{d\times n}$, binary class assignment $\y \in \{-1, +1\}^n$, sample size $m$, and number of iterations $T$
\STATE Sample a Gaussian random matrix $R \in \R^{d\times m}$ and compute $\widehat X = R^{\top}X/\sqrt{m}$
\STATE Initialize $\wt^0 = \mathbf{0}$
\FOR{$t = 1, \ldots, T$}
    \STATE Obtain $\z^t_*  \in \R^m$ by solving the following optimization problem
    \begin{eqnarray}
        \min\limits_{\z\in\R^m} \frac{\lambda}{2}\left\|\z + \frac{1}{\sqrt{m}} R^{\top} \wt^{t-1}\right\|_2^2 + \sum_{i=1}^n \ell\left(y_i\z^{\top}\xh_i + y_i [\wt^{t-1}]^{\top}\x_i\right) \label{eqn:primal-a}
    \end{eqnarray}
    \STATE Construct the dual solution $\abh^{t}_* \in \R^n$ using
    \[
   [\abh^t_*]_{i}=\nabla\ell\left(y_i \xh_i^{\top}\z^t_* +  y_i [\wt^{t-1}]^{\top} \x_i\right), \ i=1,\ldots, n
   \]
    \STATE Update the solution by $\wt^{t} =  - XD(\y)\abh^{t}_*/\lambda$
\ENDFOR
\STATE {\bf Output} the recovered solution $\wt^T$
\end{algorithmic}
\label{alg:2}
\end{algorithm}

To further reduce the recovery error, we develop an iterative method shown in Algorithm~\ref{alg:2}.  The idea comes from that fact that if $\|\wt-\w_*\|_2 \leq \epsilon \|\w_*\|_2$ with a small $\epsilon$,  we can apply the same dual random projection algorithm again to recover $\Delta\w = \w_* - \wt$, which will result in a recovery error of $\epsilon\|\Delta \w\|_2 \leq \epsilon^2\|\w_*\|_2$. If we repeat the above process with $T$ iterations, we should be able to obtain a solution with a recovery error of $\epsilon^T \|\w_*\|_2$.  This simple intuition leads to an iterative method shown in Algorithm~\ref{alg:2}. At the $t$-th iteration, given the recovered solution $\wt^{t-1}$ obtained from the previous iteration, we solve the optimization problem in (\ref{eqn:primal-a}) that is designed to recover $\w_* - \wt^{t-1}$. The detailed derivation of Algorithm~\ref{alg:2} is provided in Section~\ref{sec:iterative}.

It is important to note that although Algorithm~\ref{alg:2} consists of multiple iterations, the random projection of the data matrix is only computed once before the start of the iterations. This important feature makes the iterative algorithm computationally attractive as calculating random projections of a large data matrix is computationally expensive and has been the subject of many studies, e.g.,~\citep{achlioptas-2003-database,Liberty08densefast,2010arXiv1011.2590B}. However, it is worth noting that at each iteration in Algorithm~\ref{alg:2}, we need to compute the dot-product $[\wt^{t-1}]^{\top}\x_i$ for all training data in the original space. We also note that Algorithm~\ref{alg:2} is related to the Epoch gradient descent algorithm~\citep{hazan-20110-beyond} for stochastic optimization in the sense that the solution obtained from the previous iteration serves as the starting point to the optimization problem at the current iteration. Unlike the algorithm in~\citep{hazan-20110-beyond}, we do not shrink the domain size over the iterations in Algorithm~\ref{alg:2}.

\paragraph{Application to the Square Loss}
In the following, we take the square loss $\ell(z)=\frac{1}{2}(1-z)^2$ as an example to illustrate the recovery procedure. The original optimization problem, which is refereed to as the ridge regression~\citep{The-Elements-of-Statistical-Learning-2009}, is
\[
    \min\limits_{\w \in \R^d} \frac{\lambda}{2}\|\w\|^2 + \frac{1}{2} \sum_{i=1}^n (1-y_i\x_i^{\top}\w)^2 \overset{y_i \in \{\pm 1\}}{=}\frac{\lambda}{2}\|\w\|^2 + \frac{1}{2} \sum_{i=1}^n (y_i-\x_i^{\top}\w)^2 .
\]
Setting the derivative to $\mathbf{0}$, we obtain the optimal solution
\begin{eqnarray}
\w_*&=&\left( \lambda I +XX^\top \right)^{-1} X \y = X\left( \lambda I +X^\top X \right)^{-1}  \y.\label{eqn:w_*:least:2}
\end{eqnarray}
where the last equality follows from the Woodbury matrix identity~\citep{Matrix-computations}. Thus, the computational cost is either $O(d^2n +d^3)$ or $O(n^2d +n^3)$.

Following the dual random projection algorithm, we first solve the the low-dimensional problem in (\ref{eqn:primal-1}), whose solution is $\z_*=( \lambda I + \widehat{X}\widehat{X}^\top )^{-1} \widehat{X} \y$, where $\widehat{X}= R^{\top}  X/\sqrt{m}$. Then, we construct the dual solution  $\abh_*  =   D(\y) \widehat{X}^{\top} \z_*- \mathbf{1}$. Finally, we recover the optimal solution $\wt  =  -  \frac{1}{\lambda}XD(\y)\abh_* $. It is straightforward to check that computational cost of our algorithm is $O(mnd+m^2n +m^3)$, which is significantly smaller than that of (\ref{eqn:w_*:least:2}) when both $d$ and $n$ are large.

After some algebraic manipulation, we can show that
\begin{equation} \label{eqn:wt:least:2}
    \wt = X \left( \lambda I +  X^\top \frac{S S^\top }{m} X \right)^{-1} \y.
\end{equation}
Comparing (\ref{eqn:w_*:least:2}) with (\ref{eqn:wt:least:2}), we can see the difference between $\w_*$ and $\wt$ comes from the Gram matrix. When $m$ is large enough, $X^\top R R^\top X /m$ is close to the $X^\top X$, and as a result $\wt$ is also close to $\w_*$.
\section{Main Results} \label{sec:results}
In this section, we will bound the recovery error $\|\w_*-\wt\|_2$ of  dual random projection. We first assume $X$ is of low rank, and then extend the results to the full rank case.
\subsection{Low Rank}
The low rank assumption is closely related to the sparsity assumption made in compressive sensing. This is because $\w_*$ lies in the subspace spanned by the column vectors of $X$ and the low rank assumption directly implies that $\w_*$ is sparse with respect to the singular vectors of $X$.

We denote by $r$ the rank of matrix $X$.  The following theorem shows that the recovery error of Algorithm~\ref{alg:1} is small provided that (1) $X$ is of low rank (i.e., $r \ll \min(d, n)$), and (2) the number of random projections is sufficiently large.
\begin{thm} \label{thm:recovery}
Let $\w_*$ be the optimal solution to (\ref{eqn:primal}) and let $\wt$ be the solution recovered by Algorithm~\ref{alg:1}. For any $0 < \varepsilon \leq 1/2$, with a probability at least $1 - \delta$, we have
\[
    \|\wt-\w_* \|_2 \leq \frac{\varepsilon}{1 - \varepsilon} \|\w_*\|_2,
\]
provided
\[
m \geq \frac{(r+1) \log(2 r/\delta)}{c \varepsilon ^2},
\]
where constant $c$ is at least $1/4$.
\end{thm}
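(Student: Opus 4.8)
The plan is to give a short, self-contained argument that uses only the optimality relations of Propositions~\ref{prop:1} and~\ref{prop:2}, the convexity of $\ell$, and one spectral concentration fact about the Gaussian matrix $R$; in particular it will not go through Theorem~\ref{thm:rand-measure} and will not require any smoothness of $\ell$ beyond differentiability. I expect essentially all the technical difficulty to sit in that one concentration fact, which is the same subspace estimate that underlies Theorem~\ref{thm:rand-measure}.

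Concretely, let $\A\subseteq\R^d$ be the $r$-dimensional column space of $X$ and let $\Er$ be the event that
\[
  \big|\u^{\top}\big(\tfrac{1}{m}RR^{\top}-I\big)\v\big| \;\le\; \varepsilon\,\|\u\|_2\,\|\v\|_2 \qquad \text{for all } \u,\v\in\A .
\]
Writing $U\in\R^{d\times r}$ for an orthonormal basis of $\A$, this is exactly $\|\tfrac{1}{m}U^{\top}RR^{\top}U-I_r\|_2\le\varepsilon$, and since $U^{\top}R$ has i.i.d.\ $\N(0,1)$ entries (rotational invariance of the Gaussian), it is the standard statement that the sample covariance of $m$ draws from $\N(0,I_r)$ is within $\varepsilon$ of $I_r$ in operator norm. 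The step I expect to be the main obstacle is showing $\Pr[\Er]\ge1-\delta$ under $m\ge(r+1)\log(2r/\delta)/(c\varepsilon^2)$; I would prove this by the same covering-number/Gaussian-concentration argument used for Theorem~\ref{thm:rand-measure} (a union bound over an $\varepsilon$-net of the unit sphere of $\A$ together with a Chernoff-type tail bound). From here on I condition on $\Er$; note $\varepsilon\le 1/2<1$.

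Next I would assemble the relevant identities. By Proposition~\ref{prop:1} and~(\ref{eqn:wt}), $\w_*=-\tfrac1\lambda XD(\y)\ab_*$ and $\wt=-\tfrac1\lambda XD(\y)\abh_*$, so $\w_*,\wt$ and hence $\wt-\w_*$ all lie in $\A$, and $XD(\y)(\abh_*-\ab_*)=-\lambda(\wt-\w_*)$. By Proposition~\ref{prop:2}, $\lambda\z_*=-\tfrac1{\sqrt m}R^{\top}XD(\y)\abh_*=\tfrac1{\sqrt m}R^{\top}(\lambda\wt)$, i.e.\ $\sqrt m\,\z_*=R^{\top}\wt$, and combining with~(\ref{eqn:wh}) gives the key identity $\wh=\tfrac1{\sqrt m}R\z_*=\tfrac1mRR^{\top}\wt$. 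Finally, $[\ab_*]_i=\nabla\ell(y_i\x_i^{\top}\w_*)$ and $[\abh_*]_i=\nabla\ell(y_i\x_i^{\top}\wh)$.

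The conclusion then follows from a single monotonicity inequality. Since $\ell$ is convex, $\nabla\ell$ is nondecreasing, so $\big(\nabla\ell(y_i\x_i^{\top}\wh)-\nabla\ell(y_i\x_i^{\top}\w_*)\big)\big(y_i\x_i^{\top}(\wh-\w_*)\big)\ge0$ for every $i$; summing over $i$ gives $(\abh_*-\ab_*)^{\top}D(\y)X^{\top}(\wh-\w_*)\ge0$. Substituting $\wh-\w_*=(\tfrac1mRR^{\top}-I)\wt+(\wt-\w_*)$ and using $XD(\y)(\abh_*-\ab_*)=-\lambda(\wt-\w_*)$ to evaluate the left-hand side yields
\[
  0 \;\le\; -\lambda\,(\wt-\w_*)^{\top}\big(\tfrac1mRR^{\top}-I\big)\wt \;-\; \lambda\,\|\wt-\w_*\|_2^2 ,
\]
i.e.\ $\|\wt-\w_*\|_2^2\le -(\wt-\w_*)^{\top}(\tfrac1mRR^{\top}-I)\wt$. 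Applying $\Er$ with $\u=\wt-\w_*\in\A$ and $\v=\wt\in\A$ bounds the right-hand side by $\varepsilon\|\wt-\w_*\|_2\|\wt\|_2$, so $\|\wt-\w_*\|_2\le\varepsilon\|\wt\|_2\le\varepsilon(\|\wt-\w_*\|_2+\|\w_*\|_2)$, and rearranging gives $\|\wt-\w_*\|_2\le\tfrac{\varepsilon}{1-\varepsilon}\|\w_*\|_2$. Everything here except the construction of $\Er$ is an exact identity or a one-line use of monotonicity, so the real content of the theorem is the probabilistic estimate in the second paragraph.
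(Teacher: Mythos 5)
Your deterministic argument is correct, and it reaches the theorem by a genuinely different route than the paper. The paper works entirely in the dual: it compares the two dual objectives $L(\ab)$ and $\widehat{L}(\ab)$, uses concavity together with the variational inequality $(\abh_*-\ab_*)^{\top}\nabla L(\ab_*)\le 0$, and then passes to the variables $\gb_*=\Sigma V^{\top}D(\y)\ab_*$, $\gbt=\Sigma V^{\top}D(\y)\abh_*$ to land on the inequality $(\gbt-\gb_*)^{\top}(I-AA^{\top}/m)\gb_*\ \ge\ (\gbt-\gb_*)^{\top}(AA^{\top}/m)(\gbt-\gb_*)$ with $A=U^{\top}R$. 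You never touch the dual objectives: you use only the gradient relations of Propositions~\ref{prop:1} and~\ref{prop:2}, the identity $\wh=\frac{1}{m}RR^{\top}\wt$, and monotonicity of the scalar derivative $\nabla\ell$, obtaining $\|\wt-\w_*\|_2^2\le(\wt-\w_*)^{\top}(I-\frac{1}{m}RR^{\top})\wt$. In fact, after the change of variables $\w=-U\gb/\lambda$ your inequality is exactly the paper's display rearranged, so the two proofs converge on the same key estimate and on the same concentration event $\|\frac{1}{m}U^{\top}RR^{\top}U-I_r\|_2\le\varepsilon$ (the paper's Corollary~\ref{cor:gaussian}); the only remaining difference is cosmetic, namely that you bound $\|\wt-\w_*\|_2\le\varepsilon\|\wt\|_2$ and finish by the triangle inequality, while the paper bounds directly against $\|\gb_*\|_2=\lambda\|\w_*\|_2$. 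What your route buys is a shorter, more elementary derivation (no concavity/variational-inequality bookkeeping, no explicit SVD coordinates); what the paper's route buys is that the same dual template generalizes immediately to the full-rank case (Theorem~\ref{thm:recovery:new}), where the smoothing of $\ell_*$ is folded into $H=G+\frac{\lambda}{\gamma}I$, a step that is less transparent in the purely primal/monotone formulation.

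One caveat on the probabilistic step, which you correctly identify as carrying the real content: to obtain the theorem verbatim, with the sample size $m\ge (r+1)\log(2r/\delta)/(c\varepsilon^2)$ and $c\ge 1/4$, you should invoke the matrix concentration bound the paper uses (Theorem~\ref{thm:variance}, which yields $c=2-\sqrt{3}$ in Corollary~\ref{cor:gaussian}) rather than a generic $\varepsilon$-net/union-bound argument: the net argument proves the same event but with substantially worse constants, so it would not deliver $c\ge 1/4$ as stated. (Also, as a side remark, that net argument is not what underlies Theorem~\ref{thm:rand-measure} in the paper; Theorem~\ref{thm:rand-measure} is itself derived from Corollary~\ref{cor:gaussian} and Theorem~\ref{thm:recovery}.)
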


According to Theorem~\ref{thm:recovery}, the number of required random projections is $\Omega(r\log r)$. This is similar to a compressive sensing result if we view rank $r$ as the sparsity measure used in compressive sensing. Following the same arguments as compressive sensing, it may be possible to argue that $\Omega(r\log r)$ is optimal due to the result of the coupon collector's problem~\citep{mowani-1995-random}, although the rigorous analysis remains to be developed.

As a comparison, the following theorem shows that with a high probability, the naive solution $\wh$ given in (\ref{eqn:wh}) does not accurately recover the true optimal solution $\w_*$.
\begin{thm} \label{thm:bad-recovery}
For any $0 < \varepsilon \leq 1/3$, with a probability at least $1 - \exp(-(d - r)/32) - \exp(-m/32) - \delta$, we have
\[
    \|\wh - \w_*\|_2 \geq \frac{1}{2}\sqrt{\frac{d-r}{m}}\left( 1-  \frac{ \varepsilon \sqrt{2(1+\varepsilon)}}{1 - \varepsilon} \right)\|\w_*\|_2,
\]
provided the condition on $m$ in Theorem~\ref{thm:recovery} holds.
\end{thm}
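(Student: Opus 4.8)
The plan is to exploit the subspace mismatch already noted after~(\ref{eqn:wt}): the naive solution $\wh=R\z_*/\sqrt m$ lies in the column space of $R$, whereas $\w_*=-XD(\y)\ab_*/\lambda$ (Proposition~\ref{prop:1}) lies in $\A=\mathrm{col}(X)$, a subspace of dimension $r$. Let $P$ be the orthogonal projector onto $\A$ and $P^\perp=I-P$ the projector onto $\A^\perp$, which has dimension $d-r$. Since $P^\perp\w_*=0$, an orthogonal decomposition gives immediately
\[
\|\wh-\w_*\|_2 \ \geq\ \|P^\perp(\wh-\w_*)\|_2 \ =\ \|P^\perp\wh\|_2 \ =\ \frac{1}{\sqrt m}\,\|P^\perp R\z_*\|_2 .
\]
Thus the whole statement reduces to a lower bound on $\|P^\perp R\z_*\|_2$ of order $\sqrt{d-r}\,\|\z_*\|_2$, together with a lower bound on $\|\z_*\|_2$ of order $\|\w_*\|_2$.

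The key step is a conditioning argument. Since $X=PX$, the projected data $\widehat X=(PR)^\top X/\sqrt m$ --- and hence $\z_*$, and hence the entire event of Theorem~\ref{thm:rand-measure}, which involves only $\z_*$ and $R^\top\w_*=(PR)^\top\w_*$ --- is a function of $PR$ alone. As $R$ has i.i.d.\ Gaussian entries, $PR$ and $P^\perp R$ are independent; conditioning on $PR$ therefore freezes $\z_*$ while leaving $P^\perp R$ a fresh Gaussian matrix, so $P^\perp R\z_*$ is a Gaussian vector supported on $\A^\perp$ with covariance $\|\z_*\|_2^2\,P^\perp$. Equivalently $\|P^\perp R\z_*\|_2^2/\|\z_*\|_2^2$ is $\chi^2$ with $d-r$ degrees of freedom, so a standard lower-tail bound gives $\|P^\perp R\z_*\|_2\geq\frac{1}{\sqrt2}\sqrt{d-r}\,\|\z_*\|_2$ with probability at least $1-\exp(-(d-r)/32)$; since this holds for every fixed value of $\z_*$, it survives averaging over $PR$.

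To finish, I would lower bound $\|\z_*\|_2$. Theorem~\ref{thm:rand-measure} gives $\|\sqrt m\,\z_*-R^\top\w_*\|_2\leq\frac{\sqrt2\,\varepsilon}{\sqrt{1-\varepsilon}}\|R^\top\w_*\|_2$, hence $\sqrt m\,\|\z_*\|_2\geq\bigl(1-\frac{\sqrt2\,\varepsilon}{\sqrt{1-\varepsilon}}\bigr)\|R^\top\w_*\|_2$; and the norm-preservation estimate underlying Theorem~\ref{thm:rand-measure}, applied to $\w_*\in\A$ and valid on the same event, gives $\|R^\top\w_*\|_2\geq\sqrt{(1-\varepsilon)m}\,\|\w_*\|_2$, so $\|\z_*\|_2\geq(\sqrt{1-\varepsilon}-\sqrt2\,\varepsilon)\|\w_*\|_2$. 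Plugging this and the $\chi^2$ bound into the first display, using $\varepsilon\leq 1/3$ to absorb the prefactor $\sqrt{(1-\varepsilon)/2}$ into the constant $1/2$, and using the elementary inequality $\frac{\sqrt2\,\varepsilon}{\sqrt{1-\varepsilon}}\leq\frac{\varepsilon\sqrt{2(1+\varepsilon)}}{1-\varepsilon}$, delivers the claimed bound; a union bound over the failure of Theorem~\ref{thm:rand-measure} (probability $\delta+\exp(-m/32)$) and of the $\chi^2$ estimate (probability $\exp(-(d-r)/32)$) accounts for the stated confidence.

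The only genuinely delicate point --- everything else being a triangle inequality, a $\chi^2$ tail bound, and constant bookkeeping --- is the measurability claim in the second paragraph: one must check carefully that the Theorem~\ref{thm:rand-measure} event, the vector $\z_*$, and the scalar $\|\z_*\|_2$ all depend on $R$ only through $PR$, so that $P^\perp R$ is genuinely conditionally independent Gaussian noise and the $\chi^2$ concentration applies without circularity.
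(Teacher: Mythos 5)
Your proposal is correct and follows essentially the same route as the paper: the paper likewise lower bounds $\|\wh-\w_*\|_2$ by the component of $\wh$ in the orthogonal complement of $\mbox{span}(X)$, exploits the independence of $A=U^{\top}R$ and $\Lambda=U_{\perp}^{\top}R$ (your $PR$ versus $P^{\perp}R$) to apply the Johnson--Lindenstrauss/$\chi^2$ lower tail to $\Lambda A^{\top}\gbt \propto P^{\perp}R\z_*$ conditionally on the projected-space data, and then lower bounds $\|A^{\top}\gbt\|_2=\lambda\sqrt{m}\,\|\z_*\|_2$ by a triangle inequality against $A^{\top}\gb_*\propto R^{\top}\w_*$ together with Corollary~\ref{cor:gaussian} and Theorem~\ref{thm:recovery}, which is exactly what your black-box appeal to Theorem~\ref{thm:rand-measure} packages. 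One bookkeeping correction: the $(1-\varepsilon)$-level bound $\|R^{\top}\w_*\|_2\geq\sqrt{(1-\varepsilon)m}\,\|\w_*\|_2$ is \emph{not} free ``on the same event'' (it would cost an additional failure probability of order $\exp(-m(\varepsilon^2-\varepsilon^3)/4)$, absent from the stated budget), but the $\tau=1/2$ bound $\|R^{\top}\w_*\|_2\geq\sqrt{m/2}\,\|\w_*\|_2$ already used inside the proof of Theorem~\ref{thm:rand-measure} is available on that event and yields exactly the claimed constant, since $\sqrt{2}\,\varepsilon/\sqrt{1-\varepsilon}\leq \varepsilon\sqrt{2(1+\varepsilon)}/(1-\varepsilon)$.
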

As indicated by Theorem~\ref{thm:bad-recovery}, when $m$ is sufficiently larger than $r$ but significantly smaller than $d$, we have $\|\wh - \w_*\|_2 = \Omega(\sqrt{d/m}\|\w_*\|_2)$, indicating that $\wh$ does not approximate $\w_*$ well.

It is important to note that Theorem~\ref{thm:bad-recovery} does not contradict the previous results showing that the random projection based method could result in a small classification error if the data is linearly separable with a large margin. This is because, to decide whether $\wh$ carries a similar classification performance to $\w_*$, we need to measure the following term
\begin{eqnarray}
    \max\limits_{\x \in \mbox{span}(X), \|\x\|_2 \leq 1} \x^{\top}(\wh - \w_*). \label{eqn:measure}
\end{eqnarray}
Since $\|\wh - \w_*\|_2$ can also be written as
\[
    \|\wh - \w_*\|_2 = \max\limits_{\|\x\|_2\leq 1} \x^{\top}(\wh - \w_*),
\]
the quantity defined in (\ref{eqn:measure}) could be significantly smaller than $\|\wh - \w_*\|_2$ if the data matrix $X$ is of low rank. The following theorem quantifies this statement.

\begin{thm} \label{thm:recovery-1}
For any $0 < \varepsilon \leq 1/2$, with a probability at least $1 - \delta$, we have
\[
  \max\limits_{\x \in \mbox{span}(X), \|\x\|_2 \leq 1} \x^{\top}(\wh - \w_*) \leq \varepsilon\left(1 + \frac{1}{1 - \varepsilon}\right) \|\w_*\|_2,
\]
provided the condition on $m$ in Theorem~\ref{thm:recovery} holds.
\end{thm}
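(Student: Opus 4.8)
The idea is to turn the constrained maximum into a distance inside $\mbox{span}(X)$ and then to compare $\wh$ with the recovered solution $\wt$ of Algorithm~\ref{alg:1}, which Theorem~\ref{thm:recovery} already controls. Write $\Pi$ for the orthogonal projector onto $\mbox{span}(X)$, a subspace of dimension $r$; then $\max_{\x\in\mbox{span}(X),\,\|\x\|_2\le1}\x^{\top}\v=\|\Pi\v\|_2$ for every $\v$. Since $\w_*=-XD(\y)\ab_*/\lambda$ and $\wt=-XD(\y)\abh_*/\lambda$ both lie in $\mbox{span}(X)$, we have $\Pi\w_*=\w_*$, $\Pi\wt=\wt$, and the quantity of interest equals $\|\Pi(\wh-\w_*)\|_2\le\|\Pi(\wh-\wt)\|_2+\|\wt-\w_*\|_2$ by the triangle inequality.

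The second term is handled directly by Theorem~\ref{thm:recovery}: under the stated bound on $m$, with probability at least $1-\delta$ the event $\Er=\{\|\Pi(\frac1mRR^{\top}-I)\Pi\|_2\le\varepsilon\}$ holds (i.e.\ $\frac1mRR^{\top}$ acts as a $(1\pm\varepsilon)$-isometry on the rank-$r$ subspace $\mbox{span}(X)$), and on $\Er$ one has $\|\wt-\w_*\|_2\le\frac{\varepsilon}{1-\varepsilon}\|\w_*\|_2$. I would condition on this same event $\Er$ throughout; this is why the failure probability $\delta$ and the condition on $m$ in the statement coincide with those of Theorem~\ref{thm:recovery}: no fresh randomness and no union bound are needed.

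For the first term, first record the algebraic identity linking the two solutions. By Proposition~\ref{prop:2}, $\z_*=-\frac{1}{\lambda\sqrt m}R^{\top}XD(\y)\abh_*$, so by~(\ref{eqn:wh}),
\[
\wh \;=\; \frac{1}{\sqrt m}R\z_* \;=\; -\frac{1}{\lambda m}RR^{\top}XD(\y)\abh_* \;=\; \frac1m RR^{\top}\wt ,
\]
using $XD(\y)\abh_*=-\lambda\wt$ from~(\ref{eqn:wt}). Hence $\wh-\wt=(\frac1mRR^{\top}-I)\wt$, and since $\wt\in\mbox{span}(X)$, on $\Er$
\[
\|\Pi(\wh-\wt)\|_2 \;=\; \big\|\Pi(\tfrac1mRR^{\top}-I)\Pi\,\wt\big\|_2 \;\le\; \varepsilon\,\|\wt\|_2 \;\le\; \varepsilon\big(\|\w_*\|_2+\|\wt-\w_*\|_2\big) \;\le\; \frac{\varepsilon}{1-\varepsilon}\|\w_*\|_2 .
\]
Adding the two bounds gives $\|\Pi(\wh-\w_*)\|_2\le\frac{2\varepsilon}{1-\varepsilon}\|\w_*\|_2$, which already has the shape asserted in the theorem.

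The remaining work — sharpening the constant from $\frac{2\varepsilon}{1-\varepsilon}$ to the stated $\varepsilon\big(1+\frac{1}{1-\varepsilon}\big)$ — is the one delicate point and the step I expect to need most care. The slack sits in estimating $\|\Pi(\wh-\wt)\|_2\le\varepsilon\|\wt\|_2$ and then bounding $\|\wt\|_2$ crudely by $\frac1{1-\varepsilon}\|\w_*\|_2$; a tighter accounting of how the near-isometry error and the dual-approximation error interact is required, charging the isometry mismatch to $\w_*$ (contributing $\approx\varepsilon\|\w_*\|_2$) and keeping only the first-order-in-$\varepsilon$ cross terms. That extra structure is genuinely available: e.g.\ for the square loss the closed form $\wt=X(\lambda I+X^{\top}\frac1mRR^{\top}X)^{-1}\y$ lets one write $\Pi\wh-\w_*$ as an operator of norm at most $1$ applied to $(M-I)\w_*$ with $M=\Pi\frac1mRR^{\top}\Pi$, yielding the stronger $\varepsilon\|\w_*\|_2$; the loss‑agnostic bound in the statement reflects that one argues only through Theorems~\ref{thm:rand-measure} and~\ref{thm:recovery} in general.
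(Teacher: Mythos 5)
Your argument is, in substance, the paper's own proof: the paper likewise starts from the identity $\wh = \frac{1}{m}RR^{\top}\wt$, splits $\x^{\top}(\wh-\w_*)\leq \x^{\top}(\wh-\wt)+\|\wt-\w_*\|_2$ for $\x\in\mbox{span}(X)$, controls the first term by writing $\x=U\a$ and invoking $\|\frac{1}{m}AA^{\top}-I\|_2\leq\varepsilon$ with $A=U^{\top}R$ (Corollary~\ref{cor:gaussian}, the same event as in Theorem~\ref{thm:recovery}, so indeed no union bound is needed), and controls the second term by Theorem~\ref{thm:recovery}. The only divergence is the very last estimate. You bound $\|\wt\|_2\leq\|\w_*\|_2+\|\wt-\w_*\|_2\leq\frac{1}{1-\varepsilon}\|\w_*\|_2$ and therefore land on $\frac{2\varepsilon}{1-\varepsilon}\|\w_*\|_2$, whereas the paper passes from $\lambda_{\max}\left(\frac{1}{m}AA^{\top}-I\right)\|\wt\|_2$ directly to $\lambda_{\max}\left(\frac{1}{m}AA^{\top}-I\right)\|\w_*\|_2$, i.e.\ it implicitly uses $\|\wt\|_2\leq\|\w_*\|_2$ without justification; that silent substitution is exactly what produces the stated constant $\varepsilon\left(1+\frac{1}{1-\varepsilon}\right)$. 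So your bound falls short of the statement only by the second-order term $\frac{\varepsilon^2}{1-\varepsilon}\|\w_*\|_2$, and you correctly identified this as the delicate point---it is in fact the one place where the paper's own proof is loose rather than a gap in your reasoning.

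Be aware, however, that the sharpening routes you sketch do not close this gap in general. Charging the isometry mismatch to $\w_*$, i.e.\ writing $\Pi\wh-\w_*=(M-\Pi)\w_*+M(\wt-\w_*)$ with $\Pi=UU^{\top}$ and $M=U\frac{AA^{\top}}{m}U^{\top}$, yields $\varepsilon\left(1+\frac{1+\varepsilon}{1-\varepsilon}\right)\|\w_*\|_2$, which is again larger than the claimed constant; and the square-loss closed form is not available for a general loss. Unless one actually proves $\|\wt\|_2\leq\|\w_*\|_2$ (which Theorem~\ref{thm:recovery} does not give), the honest loss-agnostic constant obtainable from this line of argument is $\frac{2\varepsilon}{1-\varepsilon}$.
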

We note that Theorem~\ref{thm:recovery-1} directly implies the result of margin classification error for random projection~\citep{blue-2006-random,Margin_RP}. This is because when a data point $(\x_i, y_i)$ can be separated by $\w_*$ with a margin $\gamma$, i.e., $y_i\w_*^{\top}\x_i \geq \gamma\|\w_*\|$, it will be classified by $\wh$ with a margin at least $\gamma - \varepsilon\left(1 + \frac{1}{1 - \varepsilon}\right)$ provided $\gamma >\varepsilon \left(1 + \frac{1}{1 - \varepsilon}\right)$.

Based on Theorem~\ref{thm:recovery}, we now state the recovery result for the iterative method.
\begin{thm} \label{thm:recovery-iterative}
Let $\w_*$ be the optimal solution to (\ref{eqn:primal}) and let $\wt^T$ be the solution recovered by Algorithm~\ref{alg:2}. For any $0 < \varepsilon < 1/2$, with a probability at least $1 - \delta$, we have
\[
    \|\wt^T - \w_* \|_2 \leq \left(\frac{\varepsilon}{1 - \varepsilon}\right)^T \|\w_*\|_2,
\]
provided the condition on $m$ in Theorem~\ref{thm:recovery} holds.
\end{thm}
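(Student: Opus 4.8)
}
The plan is to establish a single-step contraction of the recovery error and then iterate it. Concretely, I would show that on one high-probability event (the \emph{same} event for all iterations) one has $\|\wt^{t}-\w_*\|_2 \le \frac{\varepsilon}{1-\varepsilon}\,\|\wt^{t-1}-\w_*\|_2$ for every $t=1,\dots,T$; since $\wt^{0}=\mathbf 0$ and $\varepsilon/(1-\varepsilon)<1$ for $\varepsilon<1/2$, chaining these inequalities yields $\|\wt^{T}-\w_*\|_2 \le (\varepsilon/(1-\varepsilon))^{T}\,\|\w_*\|_2$, which is exactly the claim.

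Step one is to identify the $t$-th iteration of Algorithm~\ref{alg:2} with a run of the dual-random-projection scheme of Algorithm~\ref{alg:1} on a \emph{residual problem}. Substituting $\w=\w'+\wt^{t-1}$ in~(\ref{eqn:primal}) produces $\min_{\w'}\frac{\lambda}{2}\|\w'+\wt^{t-1}\|_2^2+\sum_i\ell(y_i\x_i^{\top}\w'+y_i\x_i^{\top}\wt^{t-1})$, whose optimal solution is precisely $\w_*-\wt^{t-1}$; the low-dimensional problem~(\ref{eqn:primal-a}) is the random-projection counterpart of this residual problem (this is where the shift $\frac{1}{\sqrt m}R^{\top}\wt^{t-1}$ in the regularizer and the offset $y_i[\wt^{t-1}]^{\top}\x_i$ in the loss come from), the vector $\abh^{t}_*$ built in Algorithm~\ref{alg:2} is its approximate dual solution (the analogue of Proposition~\ref{prop:2}), and the associated recovered vector is $\wt^{t}-\wt^{t-1}$, where $\wt^{t}=-XD(\y)\abh^{t}_*/\lambda$ is the Algorithm~\ref{alg:2} update. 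Verifying this correspondence amounts to matching the optimality conditions of~(\ref{eqn:primal-a}) against those of~(\ref{eqn:primal-1}) after the translation, which is carried out in Section~\ref{sec:iterative}.

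Step two is to apply Theorem~\ref{thm:recovery} to the residual problem, with $\w_*-\wt^{t-1}$ playing the role of the target $\w_*$. Since the residual target depends on $R$ (through $\wt^{t-1}$), I cannot invoke Theorem~\ref{thm:recovery} as a pure black box; instead I would extract from its proof the underlying deterministic event $\mathcal{E}$, which constrains $R$ only through its action on the fixed $r$-dimensional subspace $\mbox{span}(X)$ (morally, that $\|P_X(\frac1m RR^{\top}-I)P_X\|$ is small, with $P_X$ the projection onto $\mbox{span}(X)$), holds with probability at least $1-\delta$ under the stated condition on $m$, and on which the Algorithm~\ref{alg:1} recovery error is at most $\frac{\varepsilon}{1-\varepsilon}$ times the target norm for \emph{every} target lying in $\mbox{span}(X)$. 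Because $\wt^{t}=-XD(\y)\abh^{t}_*/\lambda\in\mbox{span}(X)$ for all $t$ and $\w_*\in\mbox{span}(X)$ by Proposition~\ref{prop:1}, each residual $\w_*-\wt^{t-1}$ lies in $\mbox{span}(X)$, so on $\mathcal{E}$ we obtain, simultaneously for all $t$ and with no further randomness and no union bound, $\|(\wt^{t}-\wt^{t-1})-(\w_*-\wt^{t-1})\|_2\le \frac{\varepsilon}{1-\varepsilon}\|\w_*-\wt^{t-1}\|_2$, i.e.\ the desired single-step contraction; this is also precisely why the random matrix is sampled only once in Algorithm~\ref{alg:2}.

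The main obstacle is the bookkeeping in Step two: one must confirm that the proof of Theorem~\ref{thm:recovery} genuinely yields a bound uniform over all targets in $\mbox{span}(X)$ on a single $R$-event, and that this same event also controls the residual problem, whose regularizer $\frac{\lambda}{2}\|\w'+\wt^{t-1}\|_2^2$ carries an extra linear term and whose loss is shifted by per-example offsets. The key observation is that neither modification changes the quantity the analysis actually bounds, namely the discrepancy between $G$ and $\Gh$ restricted to $\mbox{span}(X)$, so the argument of Theorem~\ref{thm:recovery} applies verbatim. Once this is in place, inductively composing the contraction from $\wt^{0}=\mathbf 0$ gives $\|\wt^{T}-\w_*\|_2\le(\varepsilon/(1-\varepsilon))^{T}\|\w_*\|_2$, completing the proof.
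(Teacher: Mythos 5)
Your proposal is correct and follows essentially the same route as the paper: recast iteration $t$ as dual random projection applied to the residual problem whose optimum is $\w_*-\wt^{t-1}$ (absorbing the cross term of the shifted regularizer into a modified convex loss $\ell^t_i$), obtain the one-step contraction $\|\wt^{t}-\w_*\|_2\le\frac{\varepsilon}{1-\varepsilon}\|\wt^{t-1}-\w_*\|_2$ from the Theorem~\ref{thm:recovery} argument, and compose it over $T$ iterations from $\wt^0=\mathbf{0}$. You are in fact somewhat more explicit than the paper about why no union bound over iterations is needed — the single event $\|AA^{\top}/m-I\|_2\le\varepsilon$ (with $A=U^{\top}R$) from Corollary~\ref{cor:gaussian} controls all residual targets in $\mbox{span}(X)$ simultaneously, even though $\wt^{t-1}$ depends on $R$ — a point the paper only implicitly covers with ``if we can similarly show.''
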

Notice that the number of random projection $m$ does not depend on the number of iterations $T$. That is because we only apply random projection once to reducing the dimensionality of the data. Theorem~\ref{thm:recovery-iterative} implies that we can recover the optimal solution with a relative error $\epsilon$, i.e., $\|\w_* - \wt^T\|_2 \leq \epsilon \|\w_*\|_2$, by using $\log_{(1 - \varepsilon)/\varepsilon} 1/\epsilon$ iterations.

\subsection{Full Rank}
If $X$ has full rank, we established the following theorem to bound the recovery error.
\begin{thm} \label{thm:recovery:new} Assume $\w_*$ lies in the subspace spanned by the first $k$ left singular vectors of $X$, and the loss $\ell(\cdot)$ is $\gamma$-smooth. For any $0 < \varepsilon \leq 1$, with a probability at least $1 - \delta$, we have
\[
    \|\wt-\w_* \|_2 \leq \frac{\varepsilon}{1 - \varepsilon} \left(1+ \frac{\sqrt{\lambda}}{\sqrt{\gamma}\sigma_k} \right) \|\w_*\|_2,
\]
provided
\[
m \geq \frac{\bar{r} \sigma_1^2}{c \varepsilon^2 (\lambda/\gamma+ \sigma_1^2) } \log \frac{2 d}{\delta},
\]
where $\sigma_i$ is the $i$-th singular value of $X$, $\bar{r}=\sum_{i=1}^d \frac{\sigma_i^2}{\lambda/{\gamma}+\sigma_i^2}$, and the constant $c$ is at least $1/32$.
\end{thm}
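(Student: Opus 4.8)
The plan is to run the natural ridge‑regularized analogue of the low‑rank argument behind Theorem~\ref{thm:recovery}, with regularization scale $\Lambda := \lambda/\gamma$; this is exactly where $\gamma$-smoothness is needed. Write $A = XD(\y)$, $Z := \tfrac1m RR^\top$, $G = A^\top A$, $\Gh = A^\top ZA$, and $\a_i = y_i\x_i$. From Propositions~\ref{prop:1}--\ref{prop:2} (and the identity $\z_* = \tfrac1{\sqrt m}R^\top\wt$, obtained by comparing~(\ref{eqn:relation:2}) and~(\ref{eqn:wt})), the two stationarity conditions are $\lambda\w_* + A\nabla\ell(A^\top\w_*) = 0$ and $\lambda\wt + A\nabla\ell(A^\top Z\wt) = 0$. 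Subtracting, and using the entrywise divided‑difference representation $\nabla\ell(A^\top Z\wt) - \nabla\ell(A^\top\w_*) = H\,(A^\top Z\wt - A^\top\w_*)$ with $H = \mathrm{diag}(h_1,\dots,h_n)$, $0\le h_i\le\gamma$ (convexity and $\gamma$-smoothness of $\ell$), gives $\lambda(\wt-\w_*) + AHA^\top\big(Z(\wt-\w_*) + (Z-I)\w_*\big)=0$. First I would substitute $\wt-\w_* = AH^{1/2}\bm\mu$ (legitimate since $\wt-\w_*\in\mathrm{range}(A)$) and cancel the common factor $AH^{1/2}$ — its kernel lies in that of $H^{1/2}\Gh H^{1/2}$, so one may take $\bm\mu$ to be the unique solution of the reduced system — obtaining the symmetric, positive‑definite system
\[
\big(\lambda I + H^{1/2}\Gh H^{1/2}\big)\,\bm\mu \;=\; -\,H^{1/2}A^\top(Z-I)\,\w_* .
\]

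The technical core is a ridge‑regularized sketching inequality: with probability at least $1-\delta$, under the stated bound on $m$,
\[
(1-\varepsilon)(G+\Lambda I)\;\preceq\;\Gh+\Lambda I\;\preceq\;(1+\varepsilon)(G+\Lambda I).
\]
I would prove this by matrix Bernstein applied to the i.i.d.\ rank‑one summands $Y_j := \tfrac1m\,\mathbf{q}_j\mathbf{q}_j^\top$ with $\mathbf{q}_j := (G+\Lambda I)^{-1/2}A^\top\mathbf{r}_j$, $\mathbf{r}_j$ the $j$-th column of $R$. Then $\E\sum_j Y_j = \Sigma := (G+\Lambda I)^{-1/2}G(G+\Lambda I)^{-1/2}$, with eigenvalues $\sigma_i^2/(\sigma_i^2+\Lambda)$, so $\mathrm{tr}(\Sigma)=\bar r$ and $\|\Sigma\| = \sigma_1^2/(\sigma_1^2+\Lambda)$; the variance parameter is handled through the Gaussian identity $\E[\|\mathbf{q}_j\|^2\mathbf{q}_j\mathbf{q}_j^\top]=\mathrm{tr}(\Sigma)\Sigma+2\Sigma^2\preceq(\bar r+2)\Sigma$, and the uniform term through a $\chi^2$-type tail on $\|\mathbf{q}_j\|^2$ (whose mean is $\bar r$). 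Combining the effective dimension $\bar r$ with $\|\Sigma\| = \sigma_1^2/(\sigma_1^2+\Lambda)$ in the Bernstein estimate produces precisely the lower bound on $m$ in the theorem statement. Equivalently $\|(G+\Lambda I)^{-1/2}(\Gh-G)(G+\Lambda I)^{-1/2}\|\le\varepsilon$, so $|\u^\top(\Gh-G)\v|\le\varepsilon\|(G+\Lambda I)^{1/2}\u\|\,\|(G+\Lambda I)^{1/2}\v\|$ for all $\u,\v$, and also $\lambda I + H^{1/2}\Gh H^{1/2}\succeq(1-\varepsilon)(\lambda I + H^{1/2}GH^{1/2})$ (using $\Lambda H\preceq\lambda I$).

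To finish, test the reduced system against $\bm\mu$: with $P := -(\wt-\w_*)^\top(Z-I)\w_* = \bm\mu^\top(\lambda I + H^{1/2}\Gh H^{1/2})\bm\mu\ge0$ the lower bound above gives simultaneously $\|\wt-\w_*\|^2 = \bm\mu^\top H^{1/2}GH^{1/2}\bm\mu\le\bm\mu^\top(\lambda I + H^{1/2}GH^{1/2})\bm\mu\le\tfrac{P}{1-\varepsilon}$ and $\lambda\|\bm\mu\|^2 + \|\wt-\w_*\|^2\le\tfrac{P}{1-\varepsilon}$, hence $\|(G+\Lambda I)^{1/2}H^{1/2}\bm\mu\|^2 = \|\wt-\w_*\|^2 + \Lambda\bm\mu^\top H\bm\mu\le\|\wt-\w_*\|^2 + \lambda\|\bm\mu\|^2\le\tfrac{P}{1-\varepsilon}$. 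Writing $\w_* = X\b_*$ with the minimum‑norm $\b_*$, the assumption that $\w_*$ lies in the span of the first $k$ left singular vectors gives $\|(G+\Lambda I)^{1/2}D(\y)\b_*\|^2 = \|\w_*\|^2 + \Lambda\|\b_*\|^2\le\|\w_*\|^2(1+\Lambda/\sigma_k^2)\le\|\w_*\|^2(1+\sqrt\Lambda/\sigma_k)^2$. Since $(\wt-\w_*)^\top(Z-I)\w_* = (H^{1/2}\bm\mu)^\top(\Gh-G)(D(\y)\b_*)$, the bilinear form of the sketching inequality yields $P\le\varepsilon\sqrt{P/(1-\varepsilon)}\,(1+\sqrt\Lambda/\sigma_k)\|\w_*\|$; solving for $P$ and substituting into $\|\wt-\w_*\|^2\le P/(1-\varepsilon)$ gives exactly $\|\wt-\w_*\|\le\frac{\varepsilon}{1-\varepsilon}\big(1+\frac{\sqrt\lambda}{\sqrt\gamma\,\sigma_k}\big)\|\w_*\|$.

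I expect the concentration inequality to be the main obstacle: forcing the \emph{effective} dimension $\bar r=\sum_i\sigma_i^2/(\Lambda+\sigma_i^2)$ (rather than the ambient $\min(d,n)$) into the sample size requires the correct weighting by $(G+\Lambda I)^{-1/2}$ together with a careful Gaussian fourth‑moment and $\chi^2$-tail analysis of $\mathbf{q}_j$, and the precise constants — in particular how $\|\Sigma\|=\sigma_1^2/(\Lambda+\sigma_1^2)$ sharpens the requirement on $m$ — depend on the exact form of matrix Bernstein invoked. The optimization steps above are routine once that lemma is in hand; minor care is also needed for the kernel argument used to cancel $AH^{1/2}$, and for replacing the stationarity conditions by variational inequalities when the dual domain $\Omega$ is a proper subset of $\R$.
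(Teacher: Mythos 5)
Your proposal arrives at the stated bound, and the optimization half of your argument is genuinely different from the paper's, while the concentration half coincides with it in content. The paper stays in the dual: $\gamma$-smoothness makes $\ell_*$ $\frac{1}{\gamma}$-strongly convex, so it writes $\ell_*(\alpha)=g_*(\alpha)+\frac{1}{2\gamma}\alpha^2$ and absorbs the quadratic into the regularized Gram matrices $G+\frac{\lambda}{\gamma}I$ and $\Gh+\frac{\lambda}{\gamma}I$; the two dual optimality conditions then give the same one-line inequality as in the proof of Theorem~\ref{thm:recovery} with these regularized matrices, and the subspace assumption enters through the claim that $\ab_*$ lies in $\mathrm{span}\{D(\y)\v_1,\ldots,D(\y)\v_k\}$, which yields $\ab_*^{\top}(G+\frac{\lambda}{\gamma}I)\ab_* \le (1+\frac{\lambda}{\gamma\sigma_k^2})\,\ab_*^{\top}G\ab_*$. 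You instead subtract the two exact primal stationarity conditions (with your $A=XD(\y)$, $Z=RR^{\top}/m$), linearize $\nabla\ell$ by a divided-difference diagonal matrix $H$ with $0\preceq H\preceq\gamma I$, pass to the reduced system for $\bm\mu$, and test it against $\bm\mu$, bounding the right-hand side with the minimum-norm preimage of $\w_*$ rather than with $\ab_*$. I checked the nontrivial steps --- the cancellation of $AH^{1/2}$ (cleanest via the explicit formula $\wt-\w_*=-\frac{1}{\lambda}AH(A^{\top}Z\wt-A^{\top}\w_*)$, or invertibility of $\lambda I+AHA^{\top}Z$), the bound $\lambda I+H^{1/2}\Gh H^{1/2}\succeq(1-\varepsilon)(\lambda I+H^{1/2}GH^{1/2})$ via $\frac{\lambda}{\gamma}H\preceq\lambda I$, and the final quadratic in $\sqrt{P}$ --- and they do produce exactly the claimed error bound. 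Your route buys two things: it uses only unconstrained first-order conditions, so your closing worry about $\Omega$ being a proper subset of $\R$ is moot (both stationarity identities hold verbatim from the definitions of $\w_*$, $\z_*$ and $\wt$), and the minimum-norm preimage avoids any assertion about where $\ab_*$ lives in $\R^n$; the paper's route buys brevity, since given its Lemma~\ref{lemma:spectral} the dual argument reuses the proof of Theorem~\ref{thm:recovery} almost verbatim. The concentration statement you need, $(1-\varepsilon)(G+\frac{\lambda}{\gamma}I)\preceq\Gh+\frac{\lambda}{\gamma}I\preceq(1+\varepsilon)(G+\frac{\lambda}{\gamma}I)$, is exactly Lemma~\ref{lemma:spectral}, which the paper obtains from a Gaussian matrix-MGF bound (Theorem~\ref{thm:1} and Corollary~\ref{cor:2}); your whitened matrix-Bernstein sketch identifies the correct effective dimension $\bar{r}$ and norm $\sigma_1^2/(\lambda/\gamma+\sigma_1^2)$, but claiming it gives precisely the stated condition on $m$ is optimistic: a truncation/$\chi^2$-tail Bernstein argument typically pays an extra logarithmic factor from the uniform term, and to obtain $\log(2d/\delta)$ rather than $\log(n/\delta)$ you must restrict to the range of the whitened Gram matrix (dimension $\mathrm{rank}(X)\le d$); the Gaussian-quadratic-form matrix Chernoff route used in the paper is what delivers the clean requirement with constant $c\ge 1/32$.
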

The above theorem implies the number of required random projections is $\Omega(\bar{r}\log d)$, which can be significantly smaller than $d$. The number $\bar{r}$ is closely related to the numerical $\sqrt{\frac{\lambda}{\gamma}}$-rank of $X$~\citep{Hansen:1999:RDI}. We say that $X$ has numerical $\nu$-rank $r_\nu$ if
\[
\sigma_{r_\nu}> \nu \geq \sigma_{r_\nu+1}.
\]
Using the notation of numerical rank, we have
\[
\bar{r} \leq r_{\sqrt{\lambda/\gamma}} + \sum_{i= r_{\sqrt{\lambda/\gamma}}+1}^d \frac{\sigma_i^2}{\lambda/{\gamma}+\sigma_i^2}.
\]
Thus, when the singular value $\sigma_i \ll \sqrt{\lambda/{\gamma}}$ for $i > r_{\sqrt{\lambda/\gamma}}$, which means that $X$ can be well approximated by a rank $r_{\sqrt{\lambda/\gamma}}$ matrix, we have $\bar{r} = O(r_{\sqrt{\lambda/\gamma}})$.

One remarking property of our approach is it enjoys a multiplicative bound even in the full rank case. Thus, as long as $\frac{\varepsilon}{1 - \varepsilon} \left(1+ \frac{\sqrt{\lambda}}{\sqrt{\gamma}\sigma_k} \right) < 1$, we can use Algorithm~\ref{alg:2} to reduce the reconstruction error exponentially over the iterations.  In contrast, the random projection based algorithm for SVD~\citep{Halko:2011:FSR},  although is able to accurately recover the eigen-space when the matrix is of low rank, it will result in a significant error in uncovering the subspace spanned by the top singular vectors when applied to matrices of full rank, and therefore is unable to recover the optimal solution $\w_*$ accurately.

Finally, we note that the assumption that the optimal solution lies in the subspace spanned by the top singular vectors has been used in kernel learning~\citep{Guo2012:389} and semi-supervised learning~\citep{ICML2012Ji_616}.
\section{Analysis} \label{sec:analysis}
Due to the limitation of space, we just provided the analysis for the low rank case.
Before presenting the analysis, we first establish some notations and facts. Let the SVD of $X$ be
\[
    X = U\Sigma V^{\top} = \sum_{i=1}^r \lambda_i \u_i \v_i^{\top},
\]
where $\Sigma=\mbox{diag}(\lambda_1,\ldots,\lambda_r)$, $U=(\u_1,\ldots, \u_r)$, $V=(\v_1,\ldots, \v_r)$, $\lambda_i$ is the $i$-th singular value of $X$,  $\u_i \in \R^d$ and $\v_i \in \R^n$ are the corresponding left and right singular vectors of $X$.  We define
\begin{equation}\label{eqn:gamma}
    \gb_* = \Sigma V^{\top} D(\y)\ab_*, \textrm{ and } \gbt = \Sigma V^{\top}D(\y) \abh_*.
\end{equation}
It is straightforward to show that
\[
\w_* =-\frac{1}{\lambda}U\Sigma V^{\top}D(\y)\ab_*= -\frac{1}{\lambda}U\gamma_*, \textrm{ and } \wt=-  \frac{1}{\lambda} U\Sigma V^{\top}  D(\y)\abh_*  =-\frac{1}{\lambda}U\gbt.
\]
Since $U$ is an orthogonal matrix, we have
\begin{equation}\label{eqn:norm}
\|\w_*\|_2 = \frac{1}{\lambda}\|\gb_*\|_2, \  \|\wt\|_2 = \frac{1}{\lambda}\|\gbt\|_2, \textrm{ and } \|\wt-\w_*\|_2 = \frac{1}{\lambda}\|\gbt-\gb_*\|_2.
\end{equation}
Let us define
\[
A=U^{\top}R\in\R^{r\times m}.
\]
It is easy to verify that $A$ is a Gaussian matrix of size $r\times m$.

\subsection{Proof of Theorem~\ref{thm:recovery}}

We first introduce the following concentration inequality for Gaussian random matrix, which serves the key to our analysis.
\begin{cor} \label{cor:gaussian}
Let $A \in \R^{r\times m}$ be a standard Gaussian random matrix. For any $0 < \varepsilon \leq 1/2$, with a probability at least $1 - \delta$, we have
\[
    \left\|\frac{1}{m}AA^{\top} - I \right\|_2\leq \varepsilon,
\]
provided
\[
m \geq \frac{(r+1) \log(2 r/\delta)}{c \varepsilon ^2},
\]
where $\|\cdot\|_2$ is the spectral norm of matrix and $c$ is a constant whose value is at least $1/4$.
\end{cor}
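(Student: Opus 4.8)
Write $M=\frac{1}{m}AA^{\top}-I\in\R^{r\times r}$, which is symmetric, so that
\[
\|M\|_2=\sup_{u\in\R^r,\ \|u\|_2=1}\left|u^{\top}Mu\right|=\sup_{\|u\|_2=1}\left|\frac{1}{m}\|A^{\top}u\|_2^2-1\right|.
\]
The plan is the standard ``pointwise concentration plus discretization'' argument for sample covariance matrices of Gaussian vectors. The fact that drives everything is that for a \emph{fixed} unit vector $u$ the entries of $A^{\top}u$ are i.i.d.\ $\N(0,1)$ (the columns of $A$ are i.i.d.\ $\N(0,I_r)$ and $\|u\|_2=1$), so $\|A^{\top}u\|_2^2\sim\chi^2_m$ and $\frac{1}{m}\|A^{\top}u\|_2^2$ is a normalized chi-square concentrated sharply at its mean $1$.

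First I would record a one-direction tail bound: for every $0<t\le 1/2$,
\[
\Pr\left[\,\left|\frac{1}{m}\|A^{\top}u\|_2^2-1\right|>t\,\right]\ \le\ 2\exp(-c_0\,m\,t^2),
\]
for an explicit absolute constant $c_0$. The crude Laurent--Massart $\chi^2$ inequalities already give this (with $c_0$ around $1/16$); using instead the sharp gamma Chernoff exponent $g(t)=(1+t)\log(1+t)-t\ge t^2/3$ and its lower-tail analogue improves the constant, which is what matters for pinning down the value claimed in the statement.

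Next I would pass from one direction to all directions. There are two natural routes. \emph{(a) Covering:} fix a $(1/4)$-net $\mathcal{C}$ of the unit sphere of $\R^r$ with $|\mathcal{C}|\le 9^{r}$, invoke the elementary bound $\|M\|_2\le 2\max_{u\in\mathcal{C}}|u^{\top}Mu|$ for symmetric $M$, apply the displayed tail with $t=\varepsilon/2$ to each $u\in\mathcal{C}$, and union bound; this gives $\|M\|_2\le\varepsilon$ with probability at least $1-2\cdot 9^{r}\exp(-\frac{c_0}{4}m\varepsilon^2)$, hence suffices once $m=\Omega((r+\log(1/\delta))/\varepsilon^2)$. \emph{(b) Matrix Chernoff:} write $\frac{1}{m}AA^{\top}=\frac{1}{m}\sum_{j=1}^m a_ja_j^{\top}$, a sum of independent PSD rank-one matrices with $\sum_j\E[\frac{1}{m}a_ja_j^{\top}]=I$, and apply a matrix Chernoff inequality to its largest and smallest eigenvalues; the dimension prefactor $r$ that such an inequality carries is precisely what yields the $\log(2r/\delta)$ (``$2r$'' being $r$ directions times two tails) together with a constant $c\ge 1/4$ after optimizing $g(t)$, the only complication being that $a_ja_j^{\top}$ is unbounded, so one first conditions on the high-probability event $\{\max_j\|a_j\|_2^2\le B\}$ with $B=O(r)$ via a $\chi^2_r$ tail and a union over $j$. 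Exploiting Gaussianity directly --- bounding the extreme singular values of $A$ through a Gaussian comparison inequality and Lipschitz concentration --- is a third route that tends to give the sharpest constants.

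The part I expect to be hardest is precisely the constant bookkeeping needed to reach the clean threshold $m\ge\frac{(r+1)\log(2r/\delta)}{c\varepsilon^2}$ with $c\ge 1/4$ rather than some opaque absolute constant: it forces one to carry the sharp chi-square / gamma tails throughout, to pick the truncation level $B$ tightly in route (b), and to verify that the comparison between $r\log 9+\log(1/\delta)$ (respectively $r\log(r/\delta)$ in route (b)) and $(r+1)\log(2r/\delta)$ holds uniformly over $r\ge 1$ and $\delta\in(0,1)$. Everything else is routine.
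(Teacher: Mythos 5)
Your reading of the statement is right in spirit---this is a sample-covariance concentration bound for a Gaussian matrix, and a pointwise $\chi^2$ tail combined with a net, a matrix Chernoff bound, or extreme singular values does give a bound of this general shape. But the corollary asserts a specific sufficient condition, $m \geq (r+1)\log(2r/\delta)/(c\varepsilon^2)$ with $c \geq 1/4$, and the step you defer as ``constant bookkeeping'' is exactly where the routes you develop break down, not merely where they get tedious. The paper uses neither a net nor a truncated Chernoff argument: it invokes a Gaussian-specific tail bound for $\frac1m\sum_j\xi_j\xi_j^{\top}$ (Theorem~\ref{thm:variance}, quoted from the literature), which gives $\|\frac1m AA^{\top}-I\| \leq \sqrt{2\theta(r+1)/m}+2\theta r/m$ with probability at least $1-2re^{-\theta}$, the $r+1$ coming from the intrinsic dimension $\tr(C)/\|C\|=r$ of $C=I$. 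Setting $\theta=\log(2r/\delta)$, plugging in the assumed lower bound on $m$, and using $\varepsilon\leq 1/2$ yields a deviation at most $\sqrt{2c}\,\varepsilon+2c\varepsilon^2\leq(\sqrt{2c}+c)\varepsilon\leq\varepsilon$ with $c=2-\sqrt3\geq 1/4$. Both the $(r+1)\log(2r/\delta)$ numerator and the admissible constant come directly from that inequality; the corollary is a two-line consequence of it.

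Concretely, why your detailed routes cannot reach the stated threshold uniformly in $r$ and $\delta$: in route (a), the $1/4$-net plus the bound $\|M\|_2\leq 2\max_{u\in\mathcal{C}}|u^{\top}Mu|$ forces you to apply the pointwise tail at $t=\varepsilon/2$, so you certify the conclusion only when $m\geq 4\bigl(r\log 9+\log(2/\delta)\bigr)/(c_0\varepsilon^2)$; already at $r=1$, where the corollary reduces to a Laurent--Massart-type $\chi^2_m$ statement with threshold $2\log(2/\delta)/(c\varepsilon^2)$, matching it would require $c_0\geq 2c\geq 1/2$, whereas the Cram\'er rate for $\chi^2$ caps the two-sided exponent constant at about $1/4$; for small $r$ and moderate $\delta$ (e.g.\ $r=2$, $\delta=0.05$) the comparison still fails even after optimizing the net fineness. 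In route (b), matrix Chernoff needs bounded summands, and the truncation level must satisfy $B\gtrsim r+\log(m/\delta)$, so the resulting requirement $m\gtrsim B\log(r/\delta)/\varepsilon^2$ overshoots the claimed threshold whenever $\log(m/\delta)$ is comparable to $r$ (again $r=1$ is a counterexample regime). Even your route (c) via Davidson--Szarek singular-value bounds loses roughly a factor of two against $c=2-\sqrt3$ in the small-$r$, small-$\delta$ regime. So the missing ingredient is precisely a sharp Gaussian covariance tail of the form of Theorem~\ref{thm:variance} (proved by a matrix Laplace transform computation exploiting Gaussianity and the intrinsic dimension, in the spirit of the paper's own Theorem~\ref{thm:1} in the appendix); either invoke or prove such a bound, or else the threshold you can actually establish is a genuinely different, weaker one, and the corollary as stated remains unproven.
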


Define $L(\ab)$ and $\widehat{L}(\ab)$ as
\begin{eqnarray*}
    L(\ab) = -\sum_{i=1}^n \ell_*(\alpha_i) - \frac{1}{2\lambda} \ab^{\top}G\ab, \textrm{ and }
    \widehat{L}(\ab) = -\sum_{i=1}^n \ell_*(\alpha_i) - \frac{1}{2\lambda} \ab^{\top}\Gh\ab.
\end{eqnarray*}
Since $\abh_*$ maximizes $\widehat{L}(\ab)$ over the domain $\Omega^n$, we have
\begin{eqnarray}
\widehat{L}(\abh_*) \geq \widehat{L}(\ab_*) + \frac{1}{2\lambda} (\abh_* - \ab_*)^{\top}\Gh(\abh_* - \ab_*). \label{eqn:1}
\end{eqnarray}
Using the concaveness of $\widehat{L}(\ab)$, we have
\begin{eqnarray}
    & & \widehat{L}(\abh_*) +\frac{1}{2\lambda} (\abh_* - \ab_*)^{\top}\Gh(\abh_* - \ab_*) \nonumber \\
     & \leq  &\widehat{L}(\ab_*) + (\abh_* - \ab_*)^{\top}\left(\nabla \widehat{L}(\ab_*) - \nabla L(\ab_*) + \nabla L(\ab_*)\right) \nonumber \\
    & \leq & \widehat{L}(\ab_*) + \frac{1}{\lambda}(\abh_* - \ab_*)^{\top}(G-\Gh)\ab_*, \label{eqn:2}
\end{eqnarray}
where the last inequality follows from the fact that $(\abh_* - \ab_*)^{\top}\nabla L(\ab_*)\leq 0$ since $\ab_*$ maximizes $L(\ab)$ over the domain $\Omega^n$. Combining the inequalities in (\ref{eqn:1}) and (\ref{eqn:2}), we have
\[
\frac{1}{\lambda}(\abh_* - \ab_*)^{\top}(G-\Gh )\ab_* \geq \frac{1}{\lambda}(\abh_* - \ab_*)^{\top}\Gh(\abh_* - \ab_*).
\]
We rewrite $G$ and $\Gh$ as
\begin{eqnarray*}
    G &=& D(\y)V \Sigma U^{\top} U \Sigma V^{\top} D(\y) =  D(\y)V \Sigma \Sigma V^{\top} D(\y), \\
    \Gh &=& D(\y)V \Sigma U^{\top} \frac{RR^{\top}}{m} U \Sigma V^{\top} D(\y)= D(\y)V \Sigma  \frac{AA^{\top}}{m} \Sigma V^{\top} D(\y).
\end{eqnarray*}
Using the definitions of $\gb_*$ and $\gbt$ in (\ref{eqn:gamma}), we obtain
\begin{eqnarray}
    (\gbt - \gb_*)^{\top} \left( I- \frac{AA^{\top}}{m}\right)\gb_* \geq (\gbt - \gb_*)^{\top}\frac{AA^{\top}}{m}(\gbt - \gb_*). \label{eqn:9}
\end{eqnarray}
From Corollary~\ref{cor:gaussian}, with a probability at least $1 - \delta$, we have $\left \|I- \frac{1}{m} AA^{\top} \right \|_2 \leq \varepsilon$, under the given condition on $m$. Therefore, we obtain
\[
    (1 - \varepsilon)\|\gbt - \gb_*\|_2 \leq \varepsilon \|\gb_*\|_2.
\]
We complete the proof by using the equalities given in (\ref{eqn:norm}).

\subsection{Proof of Theorem~\ref{thm:recovery-iterative}} \label{sec:iterative}
At the $t$-th iteration, we consider the following optimization problem:
\begin{eqnarray}
    \min\limits_{\w \in \R^d} L^t(\w; X, \y) = \frac{\lambda}{2}\|\w + \wt^{t-1}\|_2^2 + \sum_{i=1}^n \ell\left(y_i(\w + \wt^{t-1})^{\top}\x_i\right), \label{eqn:primal-a-1}
\end{eqnarray}
where $\wt^{t-1}$ is the solution obtained from the $t-1$-th iteration. It is straightforward to show that $\Delta^{t}_* = \w_* - \wt^{t-1}$ is the optimal solution to (\ref{eqn:primal-a-1}).  Then we can use the dual random projection approach to recover $\Delta^{t}_*$ by $\widetilde{\Delta}^{t}$. If we can similarly show that
\[
\|\widetilde{\Delta}^{t} - \Delta_*^{t}\|_2 \leq \frac{\varepsilon}{1-\varepsilon}\|\Delta^{t}_*\|_2,
\]
then we update the recovered solution by $\wt^{t}= \wt^{t-1} + \widetilde{\Delta}^{t}$ and have
\[
\|\wt^{t} - \w_*\|_2 = \|\widetilde{\Delta}^{t} - \Delta_*^{t}\|_2 \leq \frac{\varepsilon}{1-\varepsilon}\|\Delta_*^{t}\|_2= \frac{\varepsilon}{1-\varepsilon}\|\wt^{t-1} -  \w_*\|_2.
\]
As a result, if we repeat the above process for $t=1,\ldots, T$, the recovery error of the last solution $\wt^{T}$ is upper bounded by
\[
\|\wt^{T} - \w_*\|_2 \leq  \left( \frac{\varepsilon}{1-\varepsilon}\right)^{T}\|\wt^0-\w_*\|_2 =  \left( \frac{\varepsilon}{1-\varepsilon}\right)^{T}\|\w_*\|_2,
\]
where we assume $\wt^0=\mathbf{0}$.

The remaining question is how to compute the $\widetilde\Delta^{t}$ using the dual random projection approach. In order to make the previous analysis remain valid for the recovered solution $\widetilde\Delta^{t}$ to the problem~(\ref{eqn:primal-a-1}), we need to write the primal optimization problem in the same form as in~(\ref{eqn:primal}). To this end, we first note that $\wt^{t-1}$ lies in the subspace spanned by $\x_1, \ldots, \x_n$, thus we write $\wt^{t-1}$ as
\[
    \wt^{t-1} = - \frac{1}{\lambda}XD(\y)  \abh^{t-1}_* =- \frac{1}{\lambda}\sum_{i=1}^n [\abh^{t-1}_*]_i y_i\x_i.
\]
Then, $L^t(\w; X, \y)$ can be written as
 \begin{eqnarray*}
    L^t(\w; X, \y) & = & \frac{\lambda}{2}\|\wt^{t-1}\|_2^2 + \frac{\lambda}{2}\|\w\|_2^2 + \lambda \w^{\top}\wt^{t-1} + \sum_{i=1}^n \ell \left(y_i\w^{\top}\x_i + y_i[\wt^{t-1}]^{\top}\x_i \right) \\
    & = & \frac{\lambda}{2}\|\wt^{t-1}\|_2^2 + \frac{\lambda}{2}\|\w\|_2^2 + \sum_{i=1}^n \ell \left(y_i\w^{\top}\x_i + y_i[\wt^{t-1}]^{\top}\x_i \right) - [\abh^{t-1}_*]_i y_i\w^{\top}\x_i \\
    & = & \frac{\lambda}{2}\|\wt^{t-1}\|_2^2 + \frac{\lambda}{2}\|\w\|_2^2 + \sum_{i=1}^n \ell^t_i \left(y_i\w^{\top}\x_i \right),
\end{eqnarray*}
where the new loss function $\ell^t_i(z), i=1, \ldots, n$ is defined as
\begin{eqnarray}
    \ell^t_i(z) = \ell \left(z + y_i[\wt^{t-1}]^{\top}\x_i \right) -  [\abh^{t-1}_*]_i z. \label{eqn:loss-1}
\end{eqnarray}
Therefore, $\Delta^{t}_*$ is the solution to the following problem:
\begin{equation*}
\min_{\w\in\R^d} \frac{\lambda}{2}\|\w\|_2^2 + \sum_{i=1}^n \ell^t_i \left(y_i\w^{\top}\x_i \right).
\end{equation*}

To apply the dual random projection approach to recover $\Delta^{t}_*$, we solve the following low-dimensional optimization problem:
\begin{equation*}
\min_{\z\in\R^m} \frac{\lambda}{2}\|\z\|_2^2 + \sum_{i=1}^n \ell^t_i \left(y_i\z^{\top}\xh_i\right),
\end{equation*}
where $\xh_i \in \R^m $ is the low-dimensional representation for example $\x_i \in \R^d$. The following derivation signifies that the above problem is equivalent to the problem in~(\ref{eqn:primal-a}).
\begin{eqnarray*}
& & \frac{\lambda}{2}\|\z\|_2^2+ \sum_{i=1}^{n} \ell^t_i\left(y_i\z^{\top}\xh_i \right) \\
& =  & \frac{\lambda}{2}\|\z\|_2^2 + \sum_{i=1}^{n} \ell \left(y_i\z^{\top}\xh_i + y_i[\wt^{t-1}]^{\top}\x_i \right) -  [\abh^{t-1}_*]_i y_i \z^{\top} \xh_i \\
& =  & \frac{\lambda}{2}\|\z\|_2^2 + \frac{\lambda}{\sqrt{m}} \z^{\top}(R^{\top}\wt^{t-1}) + \sum_{i=1}^{n} \ell \left(y_i\z^{\top}\xh_i + y_i[\wt^{t-1}]^{\top}\x_i \right) \\
& =  & \frac{\lambda}{2}\left\|\z + \frac{1}{\sqrt{m}} R^{\top}\wt^{t-1} \right\|_2^2 + \sum_{i=1}^{n} \ell \left(y_i\z^{\top}\xh_i + y_i[\wt^{t-1}]^{\top}\x_i \right) - \frac{\lambda}{2} \left\|\frac{1}{\sqrt{m}} R^{\top}\wt^{t-1}\right\|_2^2,
\end{eqnarray*}
where in the third line we use the fact that $\xh_i =  R^{\top} \x_i /\sqrt{m}$ and $\wt^{t-1}= - \sum_i[\abh^{t-1}_*]_iy_i\x_i/\lambda$.  Given the optimal solution $\z^{t}_*$ to the above problem, we can recover $\Delta^{t}_*$ by
\[
\widetilde{\Delta}^{t} = -\frac{1}{\lambda}XD(\y) \ah_*^{t},
\]
where $\ah_*^{t}$ is computed by
\[
[\ah_*^{t}]_i = \nabla \ell_i^t \left(y_i\xh_i^{\top}\z^t_* \right) = \nabla \ell \left(y_i\xh_i^{\top}\z^t_* + y_i[\wt^{t-1}]^{\top}\x_i \right) - [\abh^{t-1}_*]_i, \ i=1,\ldots,n.
\]
The updated solution $\wt^{t}$ is computed by
\[
\wt^{t} = \wt^{t-1} + \widetilde{\Delta}^t = - \frac{1}{\lambda}XD(\y) \left( \abh^{t-1}_*+ \ah_*^{t} \right) =  - \frac{1}{\lambda}XD(\y)\abh^{t}_*,
\]
where $[\abh^{t}_*]_{i} = [\abh^{t-1}_*]_i + [\ah_*^{t}]_i =\nabla\ell(y_i\xh_i^{\top}\z^t_* + y_i[\wt^{t-1}]^{\top}\x_i), i=1,\ldots,n$.
\section{Conclusion} \label{sec:conclusion}
In this paper, we consider the problem of recovering the optimal solution $\w_*$ to the original high-dimensional optimization problem using random projection. To this end, we propose to use the dual solution $\abh_*$ to the low-dimensional optimization problem to recover $\w_*$. Our analysis shows that with a high probability, the solution $\wt$ returned by our proposed method approximates the optimal solution $\w_*$ with small error.

\acks{This work is partially supported by Office of Navy Research (ONR Award
N00014-09-1-0663 and N000141210431).}

\bibliography{rand-proj}

\appendix
\section{Proof of Proposition~\ref{prop:1} and Proposition~\ref{prop:2}}
Since the two propositions can be proved similarly, we only present the proof of Proposition~\ref{prop:1}.
First, if $\ab_*$ is the optimal dual solution, by replacing $\ell(\cdot)$ in (\ref{eqn:primal}) with its conjugate form, the optimal primal solution can be solved by
\[
\w_*=\arg\min_{\w\in\R^d} \frac{\lambda}{2}\|\w\|_2^2 + \sum_{i=1}^n [\ab_*]_iy_i\x_i^{\top} \w.
\]
Setting the gradient with respect to $\w$ to zero, we obtain
\[
\w_*= - \frac{1}{\lambda}\sum_{i=1}^n[\ab_*]_iy_i\x_i=-\frac{1}{\lambda}XD(\y)\ab_*.
\]

Second, let's consider how to obtain the dual solution $\ab_*$ from the primal solution $\w_*$. Note that
\[
\ell(y_i\x_i^{\top}\w_*) = [\ab_*]_i \left (y_i\x_i^{\top}\w_* \right) - \ell_*\left([\ab_*]_i\right).
\]
By the Fenchel conjugate theory \citep{borwein2006convex,Cesa-Bianchi:2006:PLG:1137817}, we have $\ab_*$ satisfying
\[
[\ab_*]_i = \nabla \ell\left(y_i\x_i^{\top}\w_*\right), \ i=1,\ldots,n.
\]
\section{Proof of Corollary~\ref{cor:gaussian}}
In the proof, we make use of the recent development in tail bounds for the eigenvalues of a sum of random matrices~\citep{gittens-2011-tail,2012:zhu}.
\begin{thm}[Theorem 1~\citep{2012:zhu}] \label{thm:variance}  Let $\{\bm \xi_j:j=1,\ldots,n\}$ be i.i.d.~samples drawn from a multivariate Gaussian distribution $\N(0, C)$, where $C \in \R^{d \times d}$. Define
\[
  \widehat{C}_n  = \frac{1}{n} \sum_{j=1}^n \bm \xi_j \bm \xi_j^{\top}.
\]
We denote the trace of $X$ by $\tr(X)$, and the spectral norm of $X$ by $\|X\|$. Then, for any $\theta \geq 0$
\[
    \Pr\left\{ \left \|\widehat{C}_n  - C \right \| \geq \left( \sqrt{\frac{2 \theta (k+1)}{n}} + \frac{2 \theta k}{n} \right) \|C\| \right\} \leq 2d \exp(-\theta),
\]
where $k=\tr(C)/\|C\|$.
\end{thm}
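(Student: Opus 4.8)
The plan is to prove this as a matrix Bernstein inequality for a sum of i.i.d.\ centered rank-one Gaussian matrices, via the matrix Laplace transform (Ahlswede--Winter / Tropp) method. I would write $\widehat C_n - C = \sum_{j=1}^n Y_j$ with $Y_j = \frac1n(\bm\xi_j\bm\xi_j^\top - C)$, which are independent, symmetric, and mean-zero. Since $\|\widehat C_n - C\| = \max\{\lambda_{\max}(\widehat C_n - C),\ \lambda_{\max}(C - \widehat C_n)\}$, I would bound each one-sided tail and combine by a union bound, which accounts for the factor $2$. For the upper tail, the master inequality gives $\Pr\{\lambda_{\max}(\sum_j Y_j)\ge t\}\le \inf_{s>0} e^{-st}\,\E\,\tr\exp(s\sum_j Y_j)$; the step $\lambda_{\max}\exp \le \tr\exp$ is what introduces the dimensional factor $d$. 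Using independence together with the subadditivity of matrix cumulant generating functions (Lieb's concavity theorem), $\E\,\tr\exp(s\sum_j Y_j)\le \tr\exp\big(\sum_j \log\E e^{sY_j}\big) = \tr\exp\big(n\log\E e^{(s/n)Z}\big)$, where $Z = \bm\xi\bm\xi^\top - C$ for a single sample. This reduces everything to controlling the single-sample matrix CGF $\log\E e^{\tau Z}$.

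Before the CGF bound I would record the variance, which is where the $k+1$ enters. A direct fourth-moment (Isserlis/Wick) computation for $\bm\xi\sim\N(0,C)$ gives $\E[\bm\xi\bm\xi^\top\bm\xi\bm\xi^\top] = \tr(C)\,C + 2C^2$, hence $\E Z^2 = \E[(\bm\xi\bm\xi^\top - C)^2] = \tr(C)\,C + C^2$. Summing, $V := \sum_j \E Y_j^2 = \frac1n(\tr(C)\,C + C^2)$, so the variance parameter is $\sigma^2 = \|V\| = \frac1n(\tr(C)+\|C\|)\|C\| = \frac{(k+1)\|C\|^2}{n}$, using $k = \tr(C)/\|C\|$. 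This is precisely the quantity that will surface under the square root in the final bound.

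The crux is a Bernstein-type semidefinite bound on the single-sample CGF, of the form $\log\E e^{\tau Z} \preceq \frac{\tau^2/2}{1 - \tau\,\tr(C)}\,(\tr(C)\,C + C^2)$ valid for $0 < \tau < 1/\tr(C)$. I would start from $\log\E e^{\tau Z}\preceq \E e^{\tau Z} - I = \E[e^{\tau Z} - I - \tau Z]$ (using $\log X \preceq X - I$ for positive definite $X$, and $\E Z = 0$), and then estimate the right-hand side using the rank-one identity $\exp(\tau\bm\xi\bm\xi^\top) = I + \frac{e^{\tau\|\bm\xi\|^2}-1}{\|\bm\xi\|^2}\bm\xi\bm\xi^\top$, reducing the matrix expectation to scalar Gaussian moment generating functions of $\|\bm\xi\|^2$-type quantities. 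The main obstacle is exactly here: the increment $Z$ is unbounded above --- $\lambda_{\max}(Z) = \|\bm\xi\|^2 - \lambda_{\min}(C)$ has a heavy (subexponential) tail --- so the standard bounded-increment matrix lemmas do not apply directly. The key quantitative fact to extract is that $\|\bm\xi\bm\xi^\top\| = \|\bm\xi\|^2$ concentrates at $\tr(C)$, so $\tr(C)$ plays the role of the effective boundedness scale; this is what produces the factor $k = \tr(C)/\|C\|$ in the linear term rather than merely $\|C\|$, and what forces the range $\tau < 1/\tr(C)$.

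Finally I would assemble the pieces. Plugging the CGF bound into the master inequality and using $\lambda_{\max}\big(\frac{s^2/(2n)}{1-(s/n)\tr(C)}(\tr(C)C+C^2)\big) = \frac{(s^2/2)\sigma^2}{1-(s/n)\tr(C)}$ gives $\Pr\{\lambda_{\max}(\widehat C_n - C)\ge t\}\le d\exp\big(-st + \frac{(s^2/2)\sigma^2}{1 - (s/n)\tr(C)}\big)$. Writing $B = \tr(C)/n = k\|C\|/n$, this is the standard subexponential Bernstein form, and optimizing over $s$ (equivalently solving the quadratic $t^2 - 2\theta B t - 2\theta\sigma^2 = 0$ at confidence level $d\,e^{-\theta}$) yields the one-sided bound $t \le \sqrt{2\theta\sigma^2} + 2\theta B = \big(\sqrt{\tfrac{2\theta(k+1)}{n}} + \tfrac{2\theta k}{n}\big)\|C\|$. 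Repeating for the lower tail and taking a union bound produces the stated $2d\exp(-\theta)$ probability. I expect the only genuinely delicate step to be the CGF bound of the third paragraph; the remainder is bookkeeping once the correct variance $\tr(C)C+C^2$ and scale $\tr(C)$ are in hand.
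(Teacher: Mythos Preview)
The paper does not prove this statement at all: Theorem~\ref{thm:variance} is quoted verbatim from an external reference (Theorem~1 of \citep{2012:zhu}) and used as a black box in the proof of Corollary~\ref{cor:gaussian}. There is therefore no ``paper's own proof'' to compare your proposal against.

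That said, your outline is a sound and essentially standard way to establish such a bound independently. The variance computation $\E Z^2 = \tr(C)\,C + C^2$ is correct (and is exactly what makes the parameter $k+1 = \tr(C)/\|C\| + 1$ appear under the square root), and identifying $\tr(C)$ as the effective subexponential scale is the right diagnosis for the linear term $2\theta k/n$. The one step you flag as delicate---the semidefinite CGF bound $\log \E e^{\tau Z} \preceq \dfrac{\tau^2/2}{1-\tau\,\tr(C)}\bigl(\tr(C)\,C + C^2\bigr)$ for $0<\tau<1/\tr(C)$---is indeed the only place where real work is needed; the rank-one identity for $\exp(\tau\bm\xi\bm\xi^\top)$ you mention, combined with the scalar MGF of $\|\bm\xi\|^2$ (a weighted chi-square), is the right tool, though carrying it through to a clean semidefinite inequality with exactly the stated constants requires some care. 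Everything downstream (Lieb subadditivity, the Bernstein-form optimization in $s$, and the factor $2d$ from the trace and the two-sided union bound) is routine and correctly sketched.
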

We write $A = (\bm \xi_1, \ldots, \bm \xi_m)$, where $\bm \xi_i \in \R^r$ is i.i.d.~sampled from the Gaussian distribution $\N(0, I)$, and write $AA^{\top}/m$ as
\[
 \frac{1}{m}AA^{\top} = \frac{1}{m}\sum_{i=1}^m \eb_i\eb_i^{\top}.
\]
Following Theorem~\ref{thm:variance}, we have, with a probability at least $1-2r\exp(-\theta)$
\[
\left\|\frac{1}{m}AA^{\top} - I\right \| \leq \sqrt{\frac{2 \theta (r+1)}{m}} + \frac{2 \theta r}{m}.
\]
By setting $2r\exp(-\theta)=\delta$, we have, with a probability at least $1-\delta$
\[
\left\|\frac{1}{m}AA^{\top}  - I \right \| \leq \sqrt{\frac{2 (r+1) \log(2r/\delta) }{m}} + \frac{2 r \log(2r/\delta)}{m}  \leq   \varepsilon \sqrt{2 c }+ 2 c\varepsilon^2 \leq (\sqrt{2 c}+ c) \varepsilon \leq \varepsilon,
\]
provided
\[
m \geq \frac{(r+1) \log(2 r/\delta)}{c \varepsilon ^2},  \ \varepsilon \leq \frac{1}{2}, \textrm{ and } c= 2-\sqrt{3} \geq \frac{1}{4}.
\]
\section{Proof of Theorem~\ref{thm:rand-measure}}
 Before presenting our analysis, we first state a version of Johnson-Lindenstrauss theorem that is useful to our analysis.
\begin{thm}[Theorem 2~\citep{blue-2006-random}]\label{thm:jl} Let $\x \in \R^d$, and $\xh = R^{\top} \x/\sqrt{m}$, where $R \in \R^{d \times m}$ is a random matrix whose entries are chosen independently from $\N(0, 1)$. Then
\[
    \Pr\left\{(1 - \varepsilon)\|\x\|_2^2 \leq \|\xh\|_2^2 \leq (1 + \varepsilon)\|\x\|_2^2 \right\} \geq 1 - 2\exp\left(-\frac{m}{4}(\varepsilon^2 - \varepsilon^3) \right).
\]
\end{thm}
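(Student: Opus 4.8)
The plan is to reduce Theorem~\ref{thm:jl} to a standard Chernoff bound on a chi-squared random variable. First I would normalize: since both $\|\x\|_2^2$ and $\|\xh\|_2^2$ scale quadratically under rescaling of $\x$, it suffices to treat a unit vector, $\|\x\|_2 = 1$. Writing $R = (\mathbf{r}_1,\ldots,\mathbf{r}_m)$ with columns $\mathbf{r}_j \in \R^d$ drawn independently with i.i.d.\ $\N(0,1)$ entries, the $j$-th coordinate of $\xh$ is $\xh_j = \mathbf{r}_j^{\top}\x/\sqrt{m}$, and each $\mathbf{r}_j^{\top}\x$ is a univariate Gaussian with mean $0$ and variance $\|\x\|_2^2 = 1$, mutually independent across $j$. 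Hence $m\|\xh\|_2^2 = \sum_{j=1}^m (\mathbf{r}_j^{\top}\x)^2$ is a sum of $m$ independent squared standard normals, i.e.\ a $\chi^2_m$ variable, and the event in the theorem is exactly $\{\, m(1-\varepsilon) \leq m\|\xh\|_2^2 \leq m(1+\varepsilon) \,\}$.

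Next I would control the two tails separately via the exponential Markov (Chernoff) inequality. For the upper tail, using $\E[e^{tZ^2}] = (1-2t)^{-1/2}$ for $0 < t < 1/2$ and independence, $\Pr\{m\|\xh\|_2^2 \geq m(1+\varepsilon)\} \leq e^{-tm(1+\varepsilon)}(1-2t)^{-m/2}$; the optimal choice $t = \varepsilon/(2(1+\varepsilon))$ yields the bound $\big((1+\varepsilon)e^{-\varepsilon}\big)^{m/2}$. For the lower tail, using $\E[e^{-tZ^2}] = (1+2t)^{-1/2}$ for $t>0$, $\Pr\{m\|\xh\|_2^2 \leq m(1-\varepsilon)\} \leq e^{tm(1-\varepsilon)}(1+2t)^{-m/2}$, and the optimal $t = \varepsilon/(2(1-\varepsilon))$ gives $\big((1-\varepsilon)e^{\varepsilon}\big)^{m/2}$.

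It then remains to show both quantities are at most $\exp(-\tfrac{m}{4}(\varepsilon^2-\varepsilon^3))$. Taking logarithms and using $\log(1+\varepsilon) \leq \varepsilon - \varepsilon^2/2 + \varepsilon^3/3$, the upper-tail exponent is $\tfrac{m}{2}\big(\log(1+\varepsilon)-\varepsilon\big) \leq \tfrac{m}{2}\big(-\varepsilon^2/2 + \varepsilon^3/3\big) \leq -\tfrac{m}{4}(\varepsilon^2-\varepsilon^3)$, where the last step uses $\varepsilon^3/3 \leq \varepsilon^3/2$. Using $\log(1-\varepsilon) \leq -\varepsilon - \varepsilon^2/2$, the lower-tail exponent is $\tfrac{m}{2}\big(\log(1-\varepsilon)+\varepsilon\big) \leq -\tfrac{m}{4}\varepsilon^2 \leq -\tfrac{m}{4}(\varepsilon^2-\varepsilon^3)$. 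A union bound over the two failure events gives the claimed probability $1 - 2\exp(-\tfrac{m}{4}(\varepsilon^2-\varepsilon^3))$.

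I do not anticipate a genuine obstacle: the chi-squared reduction is immediate from the rotational invariance of the Gaussian, and the rest is the classical moment-generating-function computation. The only place calling for care is the arithmetic of optimizing the Chernoff parameter in each tail and checking that both Taylor-series estimates collapse to the same clean exponent $\tfrac{m}{4}(\varepsilon^2-\varepsilon^3)$, with the lower tail comfortably dominated by the upper one.
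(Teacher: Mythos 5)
Your proof is correct, but note that the paper does not prove this statement at all: it is imported verbatim as Theorem 2 of the cited reference, and your reduction to a $\chi^2_m$ variable followed by the two-sided Chernoff/moment-generating-function bound (with the Taylor estimates $\log(1+\varepsilon)\leq \varepsilon-\varepsilon^2/2+\varepsilon^3/3$ and $\log(1-\varepsilon)\leq -\varepsilon-\varepsilon^2/2$) is exactly the standard Dasgupta--Gupta-style proof of that cited result. The only implicit restriction worth stating is $0<\varepsilon<1$ (needed for the lower-tail optimization $t=\varepsilon/(2(1-\varepsilon))$), which is harmless since the paper only invokes the theorem for small $\varepsilon$.
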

According to (\ref{eqn:9}) in the proof of Theorem~\ref{thm:recovery}, we have
\[
    (\gbt - \gb_*)^{\top}\left(I- \frac{AA^{\top}}{m}\right)\gb_* \geq (\gbt - \gb_*)^{\top}\frac{AA^{\top}}{m}(\gbt - \gb_*).
\]
Notice that
\begin{eqnarray*}
\sqrt{m}\z_* &=&  -\frac{1}{\lambda}R^{\top}XD(\y)\abh_*=  -\frac{1}{\lambda}R^{\top} U\Sigma V^{\top} D(\y)\abh_*=   -\frac{1}{\lambda}R^{\top} U  \gbt = -\frac{1}{\lambda} A^{\top} \gbt, \\
R^{\top}\w_* &=& -\frac{1}{\lambda}R^{\top} U\gamma_* =  -\frac{1}{\lambda}A^{\top} \gamma_*.
\end{eqnarray*}
Then, we have
\[
    \frac{\lambda^2}{m}\|\sqrt{m}\z_* - R^{\top}\w_*\|_2^2 \leq (\gbt - \gb_*)^{\top}\left(I- \frac{AA^{\top}}{m}\right)\gb_*.
\]
Using Corollary~\ref{cor:gaussian}, with a probability at least $1 - \delta$, we have
\[
    \frac{1}{m}\|\sqrt{m}\z_* - R^{\top}\w_*\|_2^2 \leq \varepsilon\|\w_*\|_2\|\wt - \w_*\|_2.
\]
Following Theorem~\ref{thm:recovery}, with a probability at least $1 - \delta$, we have
\begin{eqnarray}
    \frac{1}{m}\|\sqrt{m}\z_*- R^{\top}\w_*\|_2^2 \leq \frac{\varepsilon^2}{1 - \varepsilon}\|\w_*\|_2^2. \label{eqn:10}
\end{eqnarray}
To replace $\w_*$ on R. H. S. of the above inequality with $R^{\top}\w_*$, we make use of Theorem~\ref{thm:jl}. With a probability at least $1 - \exp(-(\tau^2 - \tau^3)m/4)$, we have
\[
(1-\tau) \|\w_*\|_2^2 \leq \frac{1}{m}\|R^{\top}\w_*\|_2^2.
\]
By choosing $\tau = 1/2$, we have, with a probability at least $1 - \exp(-m/32)$
\begin{eqnarray}
 \frac{1}{2}\|\w_*\|_2^2 \leq \frac{1}{m}\|R^{\top}\w_*\|_2^2.  \label{eqn:11}
\end{eqnarray}
We complete the proof by combining the two inequalities in (\ref{eqn:10}) and (\ref{eqn:11}).
\section{Proof of Theorem~\ref{thm:bad-recovery}}
As discussed before, the key reason for the large difference between $\wh$ and $\w_*$ is because they do not lie in the same subspace: $\w_*$ lies in the subspace spanned by the columns in $U$ while $\wh$ lies in the subspace spanned by the column vectors in a random matrix.

In the subspace orthogonal to $\u_1, \ldots, \u_r$, we randomly choose a subset of $d-r$ orthogonal bases, denoted by $\u_{r+1}, \ldots, \u_{d}$. Let $U_{\perp} = (\u_{r+1}, \ldots, \u_d)$.
Since
\[
    \|\wh-\w_*\|_2 = \max\limits_{\|\x\|_2 \leq 1} \x^{\top}(\wh-\w_*),
\]
to facilitate our analysis, we restrict the choice of $\x$ to the subspace spanned by $\u_{r+1}, \ldots, \u_d$ and have
\[
    \|\wh-\w_*\|_2 \geq \max\limits_{\x \in \mbox{span}(\u_{r+1}, \ldots, \u_d), \|\x\|_2 \leq 1} \x^{\top}\wh,
\]
where we use the fact $\w_* \perp \mbox{span}(\u_{r+1}, \ldots, \u_d)$. From Proposition~\ref{prop:2}, we can express $\wh$ as
\[
\wh = \frac{1}{\sqrt{m}} R\z_* = -\frac{1}{m \lambda} R R^{\top}XD(\y)\abh_*=  -\frac{1}{m\lambda} R R^{\top}  U\Sigma V^{\top} D(\y)\abh_* = -\frac{1}{m\lambda} R R^{\top} U \gbt,
\]
where $\gbt$ is defined in (\ref{eqn:gamma}). Write $\x$ as $\x = U_{\perp} \a$, where $\a \in \R^{d - r}$. Define
\[
\Lambda = U^{\top}_{\perp}R \in \R^{(d - r)\times m}.
\]
As a result, we bound $\|\w_* - \wh\|_2$ by
\begin{eqnarray} \label{eqn:b}
\max\limits_{\x \in \mbox{span}(\u_{r+1}, \ldots, \u_m), \|\x\|_2 \leq 1 } \x^{\top}\wh = \max\limits_{\|\a\|_2 \leq 1} \frac{1}{m \lambda}\a^{\top}U^{\top}_{\perp}RR^{\top}U\gbt = \frac{1}{m \lambda}\|\Lambda A^{\top}\gbt\|_2.
\end{eqnarray}

It is easy to verify that $A$ and $\Lambda$ are two independent Gaussian random matrices. Therefore, we can fix the vector $A^{\top}\gbt$ and estimate how the random matrix $\Lambda$ affect the norm of vector $A^{\top}\gbt$. According to Theorem~\ref{thm:jl} (i.e., Johnson-Lindenstrauss theorem), for a fixed vector $A^{\top}\gbt$, with a probability at least $1 - \exp(-(d-r)/32)$
\begin{equation} \label{eqn:b1}
    \frac{1}{\sqrt{d- r}}\|\Lambda A^{\top}\gbt\|_2 \geq \frac{1}{\sqrt{2}}\|A^{\top}\gbt\|_2.
\end{equation}

We now bound $\|A^{\top}\gbt\|_2$. Note that we cannot directly apply Theorem~\ref{thm:jl} to bound the norm of $A^{\top}\gbt$ because $\gbt$ is a random variable depending on the random matrix $A$. To decouple the dependence between $A$ and $\gbt$, we expand $\|A^{\top}\gbt\|_2$ as
\begin{equation} \label{eqn:b2}
\|A^{\top}\gbt\|_2 \geq \|A^{\top}\gb_*\|_2 - \|A^{\top}(\gb_* - \gbt)\|_2,
\end{equation}
where $\gb_*$ is defined in (\ref{eqn:gamma}). We bound the two terms on the right side of the inequality in (\ref{eqn:b2}) separately. Using Theorem~\ref{thm:jl}, with a probability at least $1 - \exp(-m/32)$, we bound $\|A^{\top}\gb_*\|$ by
\begin{equation} \label{eqn:b3}
    \frac{1}{\sqrt{m}}\|A^{\top}\gb^*\|_2 \geq \frac{1}{\sqrt{2}}\|\gb_*\|_2 = \frac{\lambda}{\sqrt{2}} \|\w_*\|_2.
\end{equation}
To bound the second term $\|A^{\top}(\gb_* - \gbt)\|$, with a probability at least $1-\delta$,  we have
\[
    \frac{1}{\sqrt{m}}\|A^{\top}(\gb_* - \gbt)\|_2 \leq \sqrt{\lambda_{\max}(AA^{\top}/m)}\|\gb_* - \gbt\|_2\leq \sqrt{1+\varepsilon}\lambda\|\w_*-\wt\|_2,
\]
where we use the result in Corollary~\ref{cor:gaussian}. According to Theorem~\ref{thm:recovery}, we have
\[
    \|\w_* - \wt\|_2 \leq \frac{\varepsilon}{1 - \varepsilon}\|\w_*\|_2.
\]
As a result, with a probability at least $1-\delta$, we have
\begin{equation} \label{eqn:b4}
\frac{1}{\sqrt{m}}\|A^{\top}(\gb_* - \gbt)\|_2 \leq \lambda \sqrt{1 + \varepsilon}\frac{\varepsilon}{1 - \varepsilon}\|\w_*\|_2.
\end{equation}
We complete the proof by putting together (\ref{eqn:b}), (\ref{eqn:b1}), (\ref{eqn:b2}), (\ref{eqn:b3}), and (\ref{eqn:b4}).
\section{Proof of Theorem~\ref{thm:recovery-1}}
It is straightforward to check that
\[
\wh = \frac{RR^{\top}}{m} \wt.
\]
Therefore,
\begin{eqnarray*}
& & \max\limits_{\x \in \mbox{span}(X), \|\x\|_2 \leq 1} \x^{\top}(\wh - \w_*) \\
 &\leq& \|\wt-\w_* \|_2 + \max\limits_{\|\x\|_2 \leq 1, \x \in   \mbox{span}(X)} \x^{\top}(\wh-\wt) \\
& = & \|\wt-\w_*\|_2 + \max\limits_{\|\a\|_2 \leq 1} \a^{\top}\left(\frac{1}{m} U^{\top}RR^{\top}U -I\right) \gbt/\lambda\\
& \leq& \|\wt -\w_*\|_2 + \lambda_{\max}\left(\frac{1}{m} U^{\top}RR^{\top}U -I \right) \|\wt\|_2 \\
& \leq & \|\wt-\w_*\|_2 + \lambda_{\max}\left(\frac{1}{m} AA^{\top} -I\right) \|\w_*\|,
\end{eqnarray*}
where in the fourth line we use the fact $\|\wt\|_2 = \|\gbt\|_2/\lambda$. Using Corollary~\ref{cor:gaussian}, we have, with a probability at least $1 - \delta$
\[
    \lambda_{\max}\left(\frac{1}{m} AA^{\top} -I\right) \leq \varepsilon.
\]
We complete the proof by using the bound for $\|\wt- \w_* \|_2$ stated in Theorem~\ref{thm:recovery}.
\section{Proof of Theorem~\ref{thm:recovery:new}}
Define $L(\ab)$ and $\widehat{L}(\ab)$ as
\begin{eqnarray*}
    L(\ab) = -\sum_{i=1}^n \ell_*(\alpha_i) - \frac{1}{2\lambda} \ab^{\top}G\ab, \textrm{ and }
    \widehat{L}(\ab) = -\sum_{i=1}^n \ell_*(\alpha_i) - \frac{1}{2\lambda} \ab^{\top}\Gh\ab.
\end{eqnarray*}
Since $\ell(\cdot)$ is $\gamma$-smooth, and thus $\ell_*(\cdot)$ is $\frac{1}{\gamma}$-strongly convex. Define
\[
g_*(\alpha) = \ell_*(\alpha) - \frac{1}{2\gamma}\alpha^2, \ H = G + \frac{\lambda}{\gamma} I, \textrm{ and }\Hh = \Gh + \frac{\lambda}{\gamma} I.
\]
Evidently, $g_*(\alpha)$ is still a convex function. We write $\L(\ab)$ and $\widehat{L}(\ab)$ as
\[
    L(\ab) = -\sum_{i=1}^n g_*(\alpha_i) - \frac{1}{2\lambda} \ab^{\top}H\ab, \textrm{ and }
    \widehat{L}(\ab) = -\sum_{i=1}^n g_*(\alpha_i) - \frac{1}{2\lambda} \ab^{\top}\Hh\ab.
\]

Since $\abh_*$ maximizes $\widehat{L}(\ab)$ over the domain $\Omega^n$, we have
\begin{eqnarray}
\widehat{L}(\abh_*) \geq \widehat{L}(\ab_*) + \frac{1}{2\lambda} (\abh_* - \ab_*)^{\top}\Hh(\abh_* - \ab_*). \label{eqn:1:n}
\end{eqnarray}
Using the concaveness of $\widehat{L}(\ab)$, we have
\begin{eqnarray}
    & & \widehat{L}(\abh_*) +\frac{1}{2\lambda} (\abh_* - \ab_*)^{\top}\Hh(\abh_* - \ab_*) \nonumber \\
     & \leq & \widehat{L}(\ab_*) + (\abh_* - \ab_*)^{\top}\left(\nabla \widehat{L}(\ab_*) - \nabla L(\ab_*) + \nabla L(\ab_*)\right) \nonumber \\
    & \leq & \widehat{L}(\ab_*) + \frac{1}{\lambda}(\abh_* - \ab_*)^{\top}(H-\Hh)\ab_*, \label{eqn:2:n}
\end{eqnarray}
where the last inequality follows from the fact that $(\abh_* - \ab_*)^{\top}\nabla L(\ab_*)\leq 0$ since $\ab_*$ maximizes $L(\ab)$ over the domain $\Omega^n$. Combining the inequalities in (\ref{eqn:1:n}) and (\ref{eqn:2:n}), we have
\begin{eqnarray}
\frac{1}{\lambda}(\abh_* - \ab_*)^{\top}(H-\Hh )\ab_* \geq \frac{1}{\lambda}(\abh_* - \ab_*)^{\top}\Hh(\abh_* - \ab_*). \label{eqn:3:n}
\end{eqnarray}
Define $K = H^{-1/2} \Hh H^{-1/2}$. We rewrite the bound in (\ref{eqn:3:n}) as
\[
(\abh_* - \ab_*)^{\top}H^{1/2}\left(I - K\right)H^{1/2}\ab_*  \geq (\abh_* - \ab_*)^{\top} H^{1/2} K H^{1/2}(\abh_* - \ab_*).
\]

To bound the spectral norm of $K$, we have the following lemma.
\begin{lemma}\label{lemma:spectral} With a probability at least $1-\delta$, we have
\[
(1-\varepsilon) I \preceq  K \preceq (1+ \varepsilon) I,
\]
provided the condition on $m$ in Theorem~\ref{thm:recovery:new} holds.
\end{lemma}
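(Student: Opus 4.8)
The plan is to reduce the claim to the concentration result of Corollary~\ref{cor:gaussian}, applied not to the raw Gaussian matrix $A = U^{\top}R$ but to a \emph{rescaled} version that accounts for the preconditioner $H^{-1/2}$. Recall $H = G + \frac{\lambda}{\gamma} I$ and $\Hh = \Gh + \frac{\lambda}{\gamma} I$, so that
\[
K = H^{-1/2}\Hh H^{-1/2} = I + H^{-1/2}(\Gh - G)H^{-1/2}.
\]
Hence $\|K - I\|_2 = \|H^{-1/2}(\Gh - G)H^{-1/2}\|_2$, and the lemma is equivalent to showing this quantity is at most $\varepsilon$ with probability $1-\delta$. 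Using the SVD $X = U\Sigma V^{\top}$ (now with $d$ singular values since $X$ has full rank), write $G = D(\y)V\Sigma^2 V^{\top}D(\y)$ and $\Gh = D(\y)V\Sigma \frac{AA^{\top}}{m}\Sigma V^{\top}D(\y)$ where $A = U^{\top}R \in \R^{d\times m}$ is standard Gaussian. Then $\Gh - G = D(\y)V\Sigma\bigl(\frac{AA^{\top}}{m} - I\bigr)\Sigma V^{\top}D(\y)$, so conjugating by $H^{-1/2}$ reduces everything to controlling the matrix
\[
M := \Sigma V^{\top}D(\y) H^{-1} D(\y)V\Sigma\bigl(\tfrac{AA^{\top}}{m} - I\bigr)
\]
in spectral norm — but this is still not quite of the form handled by Corollary~\ref{cor:gaussian} because of the sandwiching factor. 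The cleaner route is to factor the preconditioner \emph{into} the Gaussian: note $H = \frac{\lambda}{\gamma}I + G$ and, in the $U$-basis, $G$ acts as $\Sigma V^{\top}D(\y)D(\y)V\Sigma$... the key identity to extract is that $H$ and $G$ are simultaneously diagonalized after passing to the appropriate coordinates, so that $H^{-1/2}(\Gh-G)H^{-1/2}$ can be written as $B\bigl(\frac{1}{m}\widetilde A\widetilde A^{\top} - I\bigr)B^{\top}$ for an explicit matrix $B$ with $\|B\|_2 \le \frac{\sigma_1}{\sqrt{\lambda/\gamma + \sigma_1^2}}$ and $\tr(B^{\top}B) = \bar r = \sum_i \frac{\sigma_i^2}{\lambda/\gamma + \sigma_i^2}$, where $\widetilde A$ is again Gaussian.

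Concretely, I would diagonalize: in the eigenbasis of $H$ (equivalently of $G$, since they differ by a multiple of identity), let the eigenvalues of $G$ be $\sigma_1^2 \ge \cdots \ge \sigma_d^2$ (padding with zeros if $\rank X < d$), so the eigenvalues of $H$ are $\sigma_i^2 + \lambda/\gamma$. In this basis $H^{-1/2}GH^{-1/2}$ is diagonal with entries $\frac{\sigma_i^2}{\sigma_i^2 + \lambda/\gamma}$, and $H^{-1/2}\Gh H^{-1/2}$ has the same form but with $\frac{AA^{\top}}{m}$ inserted. Writing $D_\sigma = \mathrm{diag}\bigl(\frac{\sigma_i}{\sqrt{\sigma_i^2+\lambda/\gamma}}\bigr)$, we get $K - I = D_\sigma\bigl(\frac{1}{m}\bar A\bar A^{\top} - I\bigr)D_\sigma$ where $\bar A = O^{\top}A$ for the orthogonal change of basis $O$, hence $\bar A$ is still standard Gaussian of size $d\times m$. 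Now apply a \emph{weighted} concentration bound: the matrix $D_\sigma \bar A$ has i.i.d.\ Gaussian columns $\bm\xi_j \sim \N(0, D_\sigma^2)$, so by Theorem~\ref{thm:variance} (with $C = D_\sigma^2$, $\|C\| = \frac{\sigma_1^2}{\sigma_1^2 + \lambda/\gamma}$, and $k = \tr(C)/\|C\| = \bar r\cdot\frac{\sigma_1^2+\lambda/\gamma}{\sigma_1^2}$),
\[
\Bigl\|\tfrac{1}{m}(D_\sigma\bar A)(D_\sigma\bar A)^{\top} - D_\sigma^2\Bigr\| \le \Bigl(\sqrt{\tfrac{2\theta(k+1)}{m}} + \tfrac{2\theta k}{m}\Bigr)\|C\|
\]
with probability $1 - 2d e^{-\theta}$. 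Setting $2d e^{-\theta} = \delta$ and plugging in the stated lower bound $m \ge \frac{\bar r \sigma_1^2}{c\varepsilon^2(\lambda/\gamma + \sigma_1^2)}\log\frac{2d}{\delta}$, the right-hand side simplifies to something bounded by $\varepsilon$ (this is where the constant $c \ge 1/32$ and the restriction $\varepsilon \le 1$ come from, via $\sqrt{2c} + 2c \le 1$-type arithmetic, mirroring the proof of Corollary~\ref{cor:gaussian}). Since $\frac{1}{m}(D_\sigma\bar A)(D_\sigma\bar A)^{\top} - D_\sigma^2 = D_\sigma(\frac{1}{m}\bar A\bar A^{\top} - I)D_\sigma = K - I$, this is exactly $\|K - I\|_2 \le \varepsilon$, which gives $(1-\varepsilon)I \preceq K \preceq (1+\varepsilon)I$.

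The main obstacle is bookkeeping the preconditioner correctly: one must verify that conjugating by $H^{-1/2}$ turns the problem into a \emph{weighted} Gaussian covariance estimation problem with the right weight matrix $D_\sigma$, and in particular that the effective dimension parameter $k$ in Theorem~\ref{thm:variance} becomes $\bar r\cdot\frac{\sigma_1^2+\lambda/\gamma}{\sigma_1^2}$ so that $k\|C\| = \bar r$, which is what produces the $\Omega(\bar r \log d)$ sample complexity rather than $\Omega(d\log d)$. A secondary subtlety is handling the case where $X$ does not have full column rank in the $U$-coordinates: then some $\sigma_i = 0$ and the corresponding diagonal entries of $D_\sigma$ vanish, so those directions contribute nothing to either $K - I$ or to $\bar r$ — one should note that the argument passes through unchanged after restricting to the support of $D_\sigma$. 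Everything else is the same bound-chasing already carried out for Corollary~\ref{cor:gaussian}.
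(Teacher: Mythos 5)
Your reduction is exactly the paper's: conjugating by $H^{-1/2}$ and passing to the SVD of $XD(\y)$ turns $K-I$ into $D_\sigma\bigl(\tfrac{1}{m}AA^{\top}-I\bigr)D_\sigma$ with $D_\sigma=\mathrm{diag}\bigl(\sigma_i/\sqrt{\sigma_i^2+\lambda/\gamma}\bigr)$ and $A$ standard Gaussian (up to the isometry $D(\y)V$: strictly, $K-I=D(\y)V\,D_\sigma(\tfrac{1}{m}AA^{\top}-I)D_\sigma\,V^{\top}D(\y)$ and vanishes on the orthogonal complement of the range of $D(\y)V$, so the spectra agree --- minor bookkeeping). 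This is precisely what the paper's Corollary~\ref{cor:2} does, with $s_i^2=\sigma_i^2/(\lambda/\gamma+\sigma_i^2)$ and $c_i^2=1-s_i^2$. Where you diverge is the concentration tool: you plug the weighted problem into the generic covariance bound of Theorem~\ref{thm:variance}, whereas the paper proves a dedicated inequality (Theorem~\ref{thm:1}) for the biased matrix $C^2+\tfrac{1}{m}SRR^{\top}S$ whose tail is stated purely in terms of $\max_i s_i^2\cdot\bar{r}$.

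That last step is where your argument has a genuine gap. With $C=D_\sigma^2$ you correctly get $\|C\|=\sigma_1^2/(\sigma_1^2+\lambda/\gamma)$ and $k\|C\|=\mathrm{tr}(C)=\bar{r}$, so Theorem~\ref{thm:variance} with $\theta=\log(2d/\delta)$ yields
\[
\|K-I\|_2 \;\le\; \sqrt{\tfrac{2\theta(k+1)}{m}}\,\|C\| \;+\; \tfrac{2\theta k}{m}\,\|C\| \;=\; \sqrt{\tfrac{2\theta(k+1)\|C\|^2}{m}} \;+\; \tfrac{2\theta\bar{r}}{m}.
\]
The hypothesis of Theorem~\ref{thm:recovery:new} gives $\theta\bar{r}\|C\|/m\le c\varepsilon^2$, which indeed makes the first (subgaussian) term $O(\sqrt{c}\,\varepsilon)$ as you claim; but the second (linear, Bernstein-type) term is only bounded by $2c\varepsilon^2/\|C\|$, which is \emph{not} $O(\varepsilon)$ whenever $\sigma_1^2/(\sigma_1^2+\lambda/\gamma)$ is small relative to $\varepsilon$, i.e.\ when $\lambda/\gamma\gg\sigma_1^2$. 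In Corollary~\ref{cor:gaussian} this term is harmless because there $\|C\|=1$; here the stated sample size carries the extra factor $\sigma_1^2/(\lambda/\gamma+\sigma_1^2)$, which weakens the requirement on $m$ in a way that helps the quadratic term but not the linear one, so the ``same bound-chasing'' does not close under the theorem's condition on $m$. To repair it you would need either an additional requirement such as $m\gtrsim \bar{r}\log(2d/\delta)/\varepsilon$ (or the side condition $\sigma_1^2/(\lambda/\gamma+\sigma_1^2)\gtrsim\varepsilon$), or the structure-aware bound the paper develops in Theorem~\ref{thm:1}, which exploits $c_i^2+s_i^2=1$ and bounds the eigenvalues of the shifted matrix $C^2+\tfrac{1}{m}SRR^{\top}S$ directly rather than going through $\|\widehat{C}_m-C\|$.
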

\begin{proof}[Lemma~\ref{lemma:spectral}]
Let the SVD of $X$ be
\[
    X = U\Sigma V^{\top} = \sum_{i=1}^d \sigma_i \u_i \v_i^{\top},
\]
where $\Sigma=\mbox{diag}(\sigma_1,\ldots,\sigma_d)$, $U=(\u_1,\ldots, \u_d)$, $V=(\v_1,\ldots, \v_d)$, $\sigma_i$ is the $i$-th singular value of $X$,  $\u_i \in \R^d$ and $\v_i \in \R^n$ are the corresponding left and right singular vectors of $X$. Since $y_i \in \{-1, +1\}$, it is straightforward to check that the SVD of $X D(\y)$ is given by
\[
    X D(\y) = U\Sigma [D(\y)V]^{\top} = \sum_{i=1}^d \sigma_i \u_i [D(\y) \v_i]^{\top},
\]
and the eigen decomposition of $G= D(\y)X^{\top}XD(\y)$ is
\[
G= \sum_{i=1}^d \sigma_i^2 [D(\y) \v_i] [D(\y) \v_i]^{\top}.
\]
Following the Corollary \ref{cor:2} in Appendix~\ref{sec:final}, we obtain Lemma~\ref{lemma:spectral}.
\end{proof}

From Lemma~\ref{lemma:spectral}, we have, with a probability at least $1 - \delta$,
\[
    \|I - K\|_2 \leq \varepsilon,
\]
and therefore
\begin{equation} \label{eqn:bound:1:n}
 \varepsilon \left \|H^{1/2}\ab_*  \right\|_2 \geq  (1-\varepsilon) \left \| H^{1/2}  (\abh_* - \ab_*)\right\|_2 \geq (1-\varepsilon) \left \| G^{1/2}  (\abh_* - \ab_*)\right\|_2.
\end{equation}
Since
\[
\w_*  =-\frac{1}{\lambda}XD(\y)\ab_*  = -\frac{1}{\lambda} \sum_{i=1}^d  \sigma_i \left([D(\y)\v_i]^{\top}\ab_*\right) \u_i,
\]
the assumption that $\w_*$ lies in the subspace spanned by $\{\u_1,\ldots,\u_k\}$ implies $\ab_*$ lies in the subspace spanned by $\{D(\y) \v_1,\ldots, D(\y) \v_k\}$. Then, we have
\[
\ab_*^{\top} G \ab_* = \ab_*^{\top} G_k \ab_* \geq \sigma_k^2 \ab_*^T \ab_*,
\]
where $G_k$ is the rank-$k$ best approximation of $G$ and $\sigma_k$ is the $k$-th singular value of $X$. As a result, we conclude
\begin{equation} \label{eqn:bound:2:n}
\|H^{1/2}\ab_*\|_2^2  =  \ab_*^{\top}\left(G + \frac{\lambda}{\gamma} I \right)\ab_*  \leq \left(1 + \frac{\lambda}{\gamma\sigma_{k}^2}\right)  \ab_*^{\top} G \ab_*= \left(1 + \frac{\lambda}{\gamma\sigma_{k}^2}\right) \|G^{1/2}\ab_*\|_2^2.
\end{equation}
Combining (\ref{eqn:bound:1:n}) and (\ref{eqn:bound:2:n}), we have, with a probability at least $1 - \delta$
\[
\epsilon \sqrt{1 + \frac{\lambda}{\gamma\sigma_k^2}}\|G^{1/2}\ab_*\|_2 \geq (1 -\epsilon)\|G^{1/2}(\ab_* - \abh_*)\|_2.
\]
We complete the proof by using the relationship between $\w_*$, $\wt$ and $\ab_*$, $\abh_*$.
\section{A matrix concentration inequality} \label{sec:final}
\begin{thm}
  \label{thm:1}
  Let $C=\mbox{diag}(c_1,\ldots,c_p)$ and $S=\mbox{diag}(s_1,\ldots,s_p)$ be $p\times p$ diagonal matrices, where $c_i\neq 0$ and
  $c_i^2+s_i^2=1$ for all $i$.  Let $R$ be a Gaussian random matrix of
  size $p\times n$.  Let $M= C^2+\frac{1}{n} S R R^\top S$ and $r
  = \sum_i s_i^2$.
  \[
  \begin{split}
    \label{eq:1}
    \Pr(\lambda_1(M) \geq 1+t) & \leq q \cdot \exp\left(
    -\frac{cn t^2}{\max_i(s_i^2)r} \right), \\
    \Pr(\lambda_p(M) \leq 1-t) & \leq
    q\cdot \exp \left(-\frac{cn t^2}{\max_i(s_i^2)r} \right),
  \end{split}
  \]
  where the constant $c$ is at least $1/32$, and $q$ is the rank of $S$.
\end{thm}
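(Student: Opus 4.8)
The plan is to turn $M-I$ into a centered sum of i.i.d.\ rank-one Gaussian matrices, run the matrix Laplace-transform (Chernoff) argument on its extreme eigenvalues, and control the per-sample cumulant exactly as in the proof of Theorem~\ref{thm:variance}. The starting point is a bookkeeping identity: since $C$ and $S$ are diagonal with $C^2+S^2=I$,
\[
M-I \;=\; \frac{1}{n}SRR^\top S - S^2 \;=\; \frac{1}{n}\sum_{j=1}^n\big(\xi_j\xi_j^\top - S^2\big),
\]
where, writing $R=(r_1,\dots,r_n)$ with $r_j$ i.i.d.\ $\mathcal N(0,I_p)$, the vectors $\xi_j:=Sr_j$ are i.i.d.\ $\mathcal N(0,S^2)$. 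On the $p-q$ coordinates with $s_i=0$ both sides vanish and $M$ restricts to the identity, so for $t>0$ the events $\{\lambda_1(M)\ge 1+t\}$ and $\{\lambda_p(M)\le 1-t\}$ are determined entirely by the $q\times q$ block supported on $\{i:s_i\ne 0\}$; I may therefore assume $q=p$. The two target events then read $\{\lambda_{\max}(M-I)\ge t\}$ and $\{\lambda_{\max}(I-M)\ge t\}$, i.e.\ deviations of the extreme eigenvalues of a centered i.i.d.\ sum.

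Next I would apply the matrix Laplace transform. For $\phi>0$, by Markov's inequality applied to $e^{\phi\lambda_{\max}(M-I)}$, the bound $e^{\phi\lambda_{\max}(\cdot)}\le\tr e^{\phi(\cdot)}$, the subadditivity of matrix cumulant generating functions (Lieb's theorem), and $\tr\exp(A)\le q\exp(\lambda_{\max}(A))$,
\[
\Pr\big(\lambda_{\max}(M-I)\ge t\big)\;\le\; e^{-\phi t}\,\mathbb{E}\,\tr\exp\big(\phi(M-I)\big)\;\le\; q\exp\!\Big(-\phi t + n\,\lambda_{\max}\!\big(\Psi(\phi/n)\big)\Big),
\]
with $\Psi(\theta):=\log\mathbb{E}\exp\big(\theta(\xi\xi^\top-S^2)\big)$ for $\xi\sim\mathcal N(0,S^2)$; the identical bound holds for $\lambda_{\max}(I-M)$ after replacing $\theta$ by $-\theta$ in $\Psi$. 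This reduces everything to a matrix-Bernstein estimate of the single-sample cumulant $\Psi$.

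For that estimate, the rank-one structure of $\xi\xi^\top$ and the Gaussian moment identity $\mathbb{E}[\xi\xi^\top A\xi\xi^\top]=2S^2AS^2+\tr(AS^2)S^2$ give $\mathbb{E}[(\xi\xi^\top-S^2)^2]=S^4+\tr(S^2)S^2$, and then, for $0<\theta\le \tfrac{1}{4\max_i s_i^2}$ (safely inside the singularity of $\mathbb{E}e^{\theta\|\xi\|^2}$ at $\theta=\tfrac{1}{2\max_i s_i^2}$), a standard sub-exponential matrix-Bernstein cumulant bound yields
\[
\Psi(\theta)\;\preceq\;\frac{\theta^2/2}{1-2\theta\max_i s_i^2}\big(S^4+\tr(S^2)S^2\big)\;\preceq\;\theta^2\,S^2(S^2+rI),\qquad \lambda_{\max}\!\big(\Psi(\theta)\big)\le 2\theta^2\,r\max_i s_i^2,
\]
using $\max_i s_i^2\le r=\sum_i s_i^2$. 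Substituting into the Laplace bound and optimizing with $\phi=nt/(4r\max_i s_i^2)$ (admissible whenever $t$ is at most a fixed multiple of $r$, which covers every use of the statement in this paper, where $t\le 1\le r$; for larger $t$ the cruder choice $\phi=n/(4\max_i s_i^2)$ already gives a stronger sub-exponential tail) gives
\[
\Pr\big(\lambda_1(M)\ge 1+t\big)\;\le\; q\exp\!\Big(-\frac{c\,n\,t^2}{\max_i(s_i^2)\,r}\Big),
\]
and identically for $\Pr(\lambda_p(M)\le 1-t)$; carrying the numerical constants through the cumulant bound and the optimization (the factor $2$ in the variance proxy and the slack lost in truncating $\theta$) leaves $c\ge 1/32$, and undoing the reduction restores $q$ in place of $p$.

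I expect the matrix cumulant estimate for the Gaussian quadratic $\xi\xi^\top$ to be the main obstacle: getting the variance proxy $S^4+\tr(S^2)S^2$ with explicit constants, taming the blow-up of $\mathbb{E}e^{\theta\|\xi\|^2}$ near $\theta=\tfrac{1}{2\max_i s_i^2}$, and keeping the constant bookkeeping tight enough to land at $c\ge1/32$; the Laplace scaffolding and the reduction are routine. If one prefers not to redo this computation, an equivalent shortcut is to invoke Theorem~\ref{thm:variance} directly with $C:=S^2$, for which $k=\tr(S^2)/\|S^2\|=r/\max_i s_i^2\ge 1$: setting its deviation bound equal to $t$ in the regime where the $\sqrt{\cdot}$ term dominates, solving for $\theta$, and using $k+1\le 2k$ reproduces the stated bound up to a prefactor $2q$, which collapses to $q$ once one uses the one-sided form of Theorem~\ref{thm:variance} in which $\lambda_1$ and $\lambda_p$ are bounded separately, each with prefactor $q$.
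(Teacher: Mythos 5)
Your route is essentially the paper's route: the paper's proof of Theorem~\ref{thm:1} is precisely the Gittens--Tropp matrix Laplace-transform/Bernstein argument (it invokes their Theorems 5.3 and 7.1, uses $g(\theta)=\theta^2/(2(1-\theta))$, and a variance proxy proportional to $\tr(S^2)S^2$, keeping the bias $C^2$ inside the trace-exponential rather than centering), while you write the same mechanics with explicit centering, $M-I=\frac1n\sum_j(\xi_j\xi_j^\top-S^2)$ with $\xi_j\sim\N(0,S^2)$, the Lieb-based master bound, the per-sample second moment $S^4+\tr(S^2)S^2$, and the reduction to the rank-$q$ block to get the prefactor $q$. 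So the decomposition, the key lemma, and the constant bookkeeping are the same in substance; your ``shortcut'' via Theorem~\ref{thm:variance} with covariance $S^2$ is a reasonable alternative but is not what the paper does here.

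The one step that does not hold as you state it is the large-deviation regime. Your unconstrained optimum $\phi=nt/(4r\max_i s_i^2)$ is admissible only for $t=O(r)$, and your fallback claim --- that $\phi=n/(4\max_i s_i^2)$ gives a ``stronger sub-exponential tail'' for larger $t$ --- fails once $t>4r$: that choice yields an exponent of order $-\frac{n}{\max_i s_i^2}(t-r/2)$, linear in $t$, which does not dominate the stated sub-Gaussian bound $\exp(-\frac{cnt^2}{r\max_i s_i^2})$ when $t\gg r$. In fact no argument can close this, since for a Gaussian quadratic form (take $p=q=1$, $s_1^2=1/2$, so $M=\frac12+\frac{1}{2n}\sum_j r_j^2$) the upper tail is sub-exponential, not sub-Gaussian, in $t$; the theorem is only valid, and only used, in the regime $t\lesssim r$, and the same restriction is hidden in the paper's own last inequality (optimizing $-\theta t+8rg(\theta)$ gives $-t^2/(32r)$ only when $t\le 8r$). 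Relatedly, ``$t\le 1\le r$'' is not automatic: in the application $r=\bar r=\sum_i\sigma_i^2/(\lambda/\gamma+\sigma_i^2)$ can drop below $1$ when $\lambda/\gamma$ dominates the spectrum, so you should state the result (as the use in Corollary~\ref{cor:2} effectively requires) for $t$ up to a fixed multiple of $r$ rather than pretend the bound is unconditional. With that restriction made explicit, your argument, including the acknowledged sub-exponential cumulant estimate for $\xi\xi^\top-S^2$, goes through and reproduces the paper's constant.
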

\begin{proof}[Theorem~\ref{thm:1}]
The proof is similar to Theorems 5.3 and 7.1 of \citep{gittens-2011-tail}, expect for adding a bias matrix.
Let $g(\theta)=\frac{\theta^2}{2(1-\theta)}$. We have
  \[
  \begin{split}
    &\Pr\left\{\lambda_1(C^2+ \frac{1}{n}S RR^\top S) \geq
      1+t \right\}\\
    \leq & \inf_{\theta>0} \tr\exp\left\{
      \theta\left( C^2+\frac{1}{n} \E\left[S RR^\top S\right]  -
        (1+t)I
        \right) + \frac{1}{n}g(\theta) \E\left[ (S RR^\top S)^2\right]
    \right\} \\
    \leq & \inf_{\theta>0} \tr\exp\left\{
      -\theta t  + 8g(\theta)\tr (S^2)S^2
    \right\} \\
    \leq & \inf_{\theta>0} q \exp\left\{ - \theta t
      + 8 r g(\theta) \right\}
    \leq q \exp\left( -\frac{nt^2}{32r}\right), \\
    & \Pr\left\{\lambda_q(C^2+ \frac{1}{n}S RR^\top S) \leq 1-t\right\}\\
    \leq & \inf_{\theta>0} \tr\exp\left\{
      -\theta\left( C^2+\frac{1}{n} \E\left[ S RR^\top S \right]  -
        (1-t)I
        \right) + \frac{1}{n}g(\theta) \E\left[(S RR^\top S)^2\right]
    \right\} \\
    \leq  & \inf_{\theta>0} q \exp\left\{-\theta t
      +8r g(\theta) \right\}
    \leq q \exp\left( -\frac{nt^2}{32r}\right).
  \end{split}
  \]
\end{proof}
\begin{cor}
  \label{cor:2}
  Let $A$ be a given matrix of size $m\times p$, and $R$ be a Gaussian
  random matrix of size $p\times n$.  Let $\lambda$ be a positive
  constant, $\sigma_i^2=\lambda_i(A^\top A)$ , and $r= \sum_i
  \frac{\sigma^2_i}{\lambda +\sigma^2_i}$.  Let $K=\lambda I_m +
  AA^\top$, $\tilde{K}=\lambda I_m + \frac{1}{n}
  ARR^\top A^\top$, and $\tilde{I} = K^{-1/2} \tilde{K}
  K^{-1/2}$. If $n\geq \frac{r \sigma^2_1}{c t^2(\lambda+\sigma^2_1)}
  \log \frac{2p}{\delta}$, then with probability at least $1-\delta$,
  \begin{equation}
    \label{eq:2}
    (1-t) I_m \preceq \tilde{I}  \preceq (1+t) I_m.
  \end{equation}
\end{cor}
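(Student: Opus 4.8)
The plan is to reduce the statement to the matrix concentration inequality of Theorem~\ref{thm:1} by diagonalizing everything through the SVD of $A$. I would first write the compact SVD $A = U_\rho\Sigma_\rho W_\rho^\top$, where $\rho = \rank(A)$, the columns of $U_\rho\in\R^{m\times\rho}$ and of $W_\rho\in\R^{p\times\rho}$ are orthonormal, and $\Sigma_\rho=\mbox{diag}(\sigma_1,\ldots,\sigma_\rho)$ collects the nonzero singular values of $A$; then complete $U_\rho$ to an orthogonal matrix $[U_\rho,U_0]\in\R^{m\times m}$. Since $\mbox{range}(ARR^\top A^\top)\subseteq\mbox{range}(A)=\mbox{range}(U_\rho)$, both $K$ and $\tilde{K}$ act as $\lambda I$ on $\mbox{range}(U_\rho)^\perp$, hence so does $K^{-1/2}$, and therefore $\tilde{I}=K^{-1/2}\tilde{K}K^{-1/2}$ restricts to the identity on $\mbox{range}(U_\rho)^\perp$ and (being symmetric) leaves $\mbox{range}(U_\rho)$ invariant. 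On $\mbox{range}(U_\rho)$, using $K^{-1/2}U_\rho = U_\rho(\lambda I_\rho+\Sigma_\rho^2)^{-1/2}$ one finds $K^{-1/2}A = U_\rho(\lambda I_\rho+\Sigma_\rho^2)^{-1/2}\Sigma_\rho W_\rho^\top$, so in the orthonormal basis $U_\rho$ the operator $\tilde I$ is represented by the $\rho\times\rho$ matrix
\[
M := \lambda(\lambda I_\rho+\Sigma_\rho^2)^{-1} + \frac{1}{n}(\lambda I_\rho+\Sigma_\rho^2)^{-1/2}\Sigma_\rho\,\bar{R}\bar{R}^\top\,\Sigma_\rho(\lambda I_\rho+\Sigma_\rho^2)^{-1/2},
\]
where $\bar{R}:=W_\rho^\top R\in\R^{\rho\times n}$ is again a standard Gaussian matrix, because $W_\rho$ has orthonormal columns and the standard Gaussian is rotation invariant.

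Next I would read off the correspondence with Theorem~\ref{thm:1}. Setting $C=(\lambda(\lambda I_\rho+\Sigma_\rho^2)^{-1})^{1/2}$ and $S=\Sigma_\rho(\lambda I_\rho+\Sigma_\rho^2)^{-1/2}$ makes $M=C^2+\frac{1}{n}S\bar{R}\bar{R}^\top S$ with $C,S$ diagonal, $c_i^2=\lambda/(\lambda+\sigma_i^2)$ and $s_i^2=\sigma_i^2/(\lambda+\sigma_i^2)$. Then $c_i\neq 0$ (as $\lambda>0$), $c_i^2+s_i^2=1$, $\rank(S)=\rho$, $\max_i s_i^2=\sigma_1^2/(\lambda+\sigma_1^2)$, and $\sum_{i=1}^\rho s_i^2=\sum_{i=1}^p\sigma_i^2/(\lambda+\sigma_i^2)=r$ (the singular values past the $\rho$-th vanish). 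Thus $M$ is exactly of the form covered by Theorem~\ref{thm:1}, with ambient size $\rho$, Gaussian matrix $\bar{R}$ of $n$ columns, and the rank $q$ there equal to $\rho$.

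Applying Theorem~\ref{thm:1} to $M$ with deviation $t$ and taking a union bound over the two one-sided tail events then gives
\[
\Pr\bigl(\lambda_1(M)\ge 1+t\ \textrm{ or }\ \lambda_\rho(M)\le 1-t\bigr)\le 2\rho\exp\!\Bigl(-\frac{cnt^2(\lambda+\sigma_1^2)}{\sigma_1^2 r}\Bigr)\le 2p\exp\!\Bigl(-\frac{cnt^2(\lambda+\sigma_1^2)}{\sigma_1^2 r}\Bigr),
\]
and the right-hand side is at most $\delta$ exactly under the stated requirement $n\ge\frac{r\sigma_1^2}{ct^2(\lambda+\sigma_1^2)}\log\frac{2p}{\delta}$. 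On the complementary event $(1-t)I_\rho\preceq M\preceq(1+t)I_\rho$, and combining this with the identity block of $\tilde I$ on $\mbox{range}(U_\rho)^\perp$ (and $1-t\le 1\le 1+t$) yields $(1-t)I_m\preceq\tilde I\preceq(1+t)I_m$, which is the claim.

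The substantive content is entirely in Theorem~\ref{thm:1} (the Gittens--Tropp-style matrix Chernoff bound for $C^2+\frac{1}{n}SRR^\top S$ with an added bias term), so I expect the only step needing real care to be the algebraic reduction: completing $U_\rho$ to an orthonormal basis of $\R^m$ and checking that the complementary block of $\tilde I$ is exactly the identity (hence harmless); confirming that $W_\rho^\top R$ remains Gaussian; and pinning down the dictionary $c_i^2=\lambda/(\lambda+\sigma_i^2)$, $s_i^2=\sigma_i^2/(\lambda+\sigma_i^2)$ so that $\sum_i s_i^2$, $\max_i s_i^2$, and the exponent in the tail bound line up with Theorem~\ref{thm:1} and reproduce the condition on $n$ exactly.
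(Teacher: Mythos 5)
Your proposal is correct and follows essentially the same route as the paper's (very terse) proof: diagonalize via the SVD of $A$, set $s_i^2=\sigma_i^2/(\lambda+\sigma_i^2)$, $c_i^2=1-s_i^2$, and invoke Theorem~\ref{thm:1}, with the rotation invariance of the Gaussian handling $W_\rho^\top R$ and a union bound with $q\le p$ matching the $\log(2p/\delta)$ condition. You have simply made explicit the block-diagonal reduction and bookkeeping that the paper leaves implicit.
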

\begin{proof}[Corollary~\ref{cor:2}] Let $s_i^2=\frac{\sigma^2_i}{\lambda + \sigma^2_i}$, and
  $c_i^2=1-s_i^2$.
  By SVD and Theorem~\ref{thm:1}, we have
  $\Pr(1-t \leq \lambda_i(\tilde{I}) \leq 1+t) \geq 1-\delta$.
\end{proof}
\end{document}